\documentclass{article}

\usepackage{microtype}
\usepackage{graphicx}
\usepackage{booktabs} 
\usepackage{subfigure}
\usepackage{bbding}
\usepackage{pifont}
\usepackage{xurl}  
\usepackage{hyperref}
\usepackage{multirow}
\usepackage{makecell}

\newcommand{\gr}{\textsc{GR$^{3}$}}

\usepackage[accepted]{icml2025}

\usepackage{amsmath}
\usepackage{amssymb}
\usepackage{mathtools}
\usepackage{amsthm}

\usepackage[capitalize,noabbrev]{cleveref}

\theoremstyle{plain}
\newtheorem{theorem}{Theorem}[section]
\newtheorem{proposition}[theorem]{Proposition}

\theoremstyle{definition}

\theoremstyle{remark}
\newtheorem{remark}[theorem]{Remark}

\usepackage{mathtools} 
\usepackage[table]{xcolor} 

\definecolor{groupgray}{gray}{0.92} 

\icmltitlerunning{Tackling Length Inflation Without Trade-offs: Group Relative Reward Rescaling for Reinforcement Learning}

\begin{document}

\twocolumn[
\icmltitle{Tackling Length Inflation Without Trade-offs:\\ Group Relative Reward Rescaling for Reinforcement Learning}



\icmlsetsymbol{equal}{*}

\begin{icmlauthorlist}
\icmlauthor{Zichao Li}{iscas,ucas}
\icmlauthor{Jie Lou}{xhs}
\icmlauthor{Fangchen Dong}{xhs}
\icmlauthor{Zhiyuan Fan}{xhs}
\icmlauthor{Mengjie Ren}{iscas,ucas}
\icmlauthor{Hongyu Lin}{iscas}
\icmlauthor{Xianpei Han}{iscas}
\icmlauthor{Debing Zhang}{xhs}
\icmlauthor{Le Sun}{iscas}
\icmlauthor{Yaojie Lu}{iscas}
\icmlauthor{Xing Yu}{xhs}
\end{icmlauthorlist}

\icmlaffiliation{iscas}{Chinese Information Processing Laboratory, Institute of Software, Chinese Academy of Sciences}
\icmlaffiliation{ucas}{University of Chinese Academy of Sciences}
\icmlaffiliation{xhs}{Xiaohongshu Inc}

\icmlcorrespondingauthor{Jie Lou}{loujie0822@gmail.com}
\icmlcorrespondingauthor{Yaojie Lu}{luyaojie@iscas.ac.cn}


\icmlkeywords{Machine Learning, ICML}

\vskip 0.3in
]



\printAffiliationsAndNotice{}  



\begin{abstract}
Reinforcement learning significantly enhances LLM capabilities but suffers from a critical issue: \textit{length inflation}, where models adopt verbosity or inefficient reasoning to maximize rewards. 
Prior approaches struggle to address this challenge in a general and lossless manner, primarily because additive penalties introduce a compensatory effect that creates optimization shortcuts, while heuristic gating strategies lack generality beyond binary feedback.
To bridge this gap, we present \textbf{Group Relative Reward Rescaling (\gr)}, which reframes length control as a multiplicative rescaling paradigm, effectively establishing a generalized, continuous, and reward-dependent gating mechanism. To further ensure lossless optimization, we incorporate group-relative regularization and advantage-aware calibration, which dynamically adapt length budgets to instance difficulty and preserve the advantage signal of high-quality trajectories. Empirically, across both RLHF and RLVR settings, \gr~maintains training dynamics and downstream performance comparable to standard GRPO while significantly mitigating length inflation, outperforming state-of-the-art length-regularized baselines.
\end{abstract}

\section{Introduction}

\begin{figure}[h]
\centering
\centerline{\includegraphics[width=0.47\textwidth]{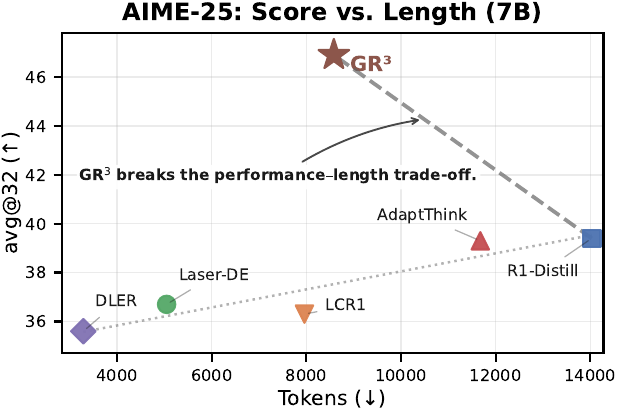}}
\caption{Comparison of \gr~with open-source efficient reasoning models, all trained on DeepSeek-R1-Distill-7B. \textbf{GR$^3$ pioneers a new paradigm that sustains stable performance gains under RL while simultaneously mitigating the length inflation issue.}} \label{fig.map.7b}
\vskip -0.1in
\end{figure}

Reinforcement learning (RL) \cite{bai2022training,guo2025deepseek} has become the engine of post-training for Large Language Models (LLMs) \cite{achiam2023gpt,team2023gemini}.
Yet this engine exhibits a persistent flaw, which we term \textit{length inflation}: a tendency for RL-trained models to produce unnecessarily lengthy trajectories, inflating inference costs without proportional gains in quality.
This phenomenon arises across major RL paradigms.
In RL from human feedback (RLHF)~\cite{ouyang2022training}, models exploit reward-model biases toward verbosity, leading to reward hacking~\cite{gao2023scaling}. In RL with verifiable rewards (RLVR)~\cite{shao2024deepseekmath}, length inflation instead stems from reasoning inefficiency~\cite{sui2025stop}, where models generate unnecessarily long chains of thought to marginally improve the likelihood of a correct solution.

\begin{figure*}[t]
\begin{center}
\centerline{\includegraphics[width=0.97\linewidth]{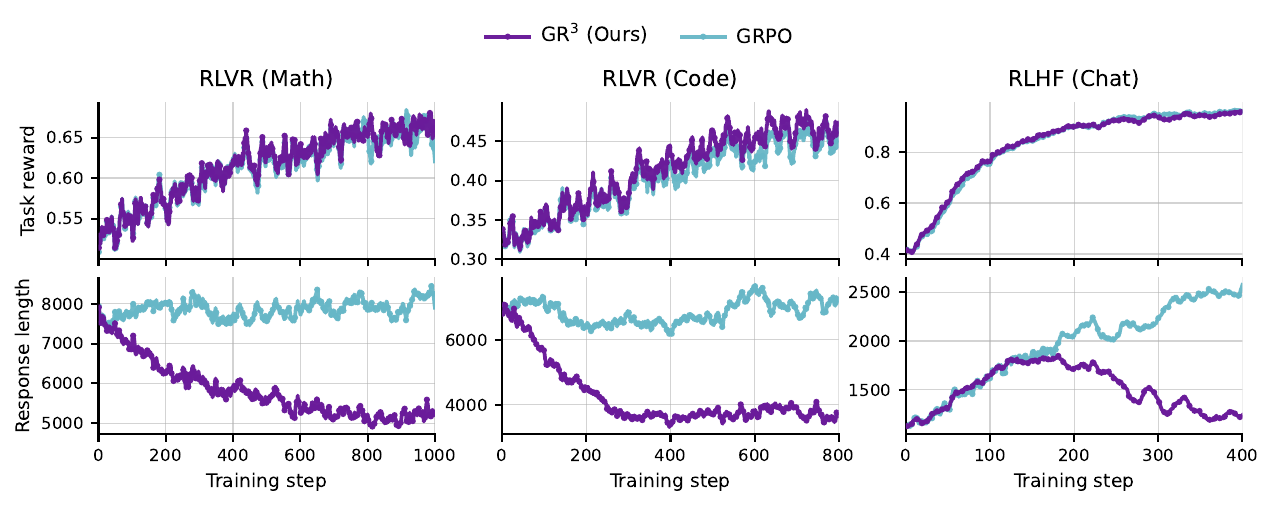}}
\vskip -0.1in
\caption{Training dynamics of \gr, which retains GRPO’s reward gains without loss while significantly reducing average tokens. The base models used for the two settings are DeepSeek-R1-Distill-1.5B and Qwen3-8B (without thinking mode), respectively.}
\label{fig:reward_length_illustration}
\vskip -0.1in
\end{center}
\end{figure*}

Prior work has sought to mitigate length inflation.
One line of research trains reward models that are invariant to response length \cite{chen2024odin,liu2024rrm}.
While effective in RLHF, this strategy does not extend to RLVR, where rewards are derived from ground-truth verifiers rather than learned proxies that can be debiased.
A more general direction instead introduces explicit length penalties into the reward \cite{luo2025o1,liu2025learn,yi2025shorterbetter}.
However, most existing methods rely on coarse regularization, which leads to suboptimal optimization dynamics.

A common design adopts additive shaping~\cite{yu2025dapo,team2025kimi}, modifying the objective with an explicit length term (e.g., $R' = R - \lambda \ell$). This introduces decoupled incentives, creating a length-driven component that makes extreme brevity an attractive shortcut independent of task success.
To better align penalties with outcomes, some works propose heuristic gating~\cite{cheng2025optimizinglengthcompressionlarge,arora2025training}, applying penalties only when $R = 1$. However, such designs are inherently limited to binary feedback and do not extend naturally to continuous-reward settings like RLHF.
Moreover, many approaches rely on coarse control mechanisms, such as static truncation thresholds or uncalibrated penalty strengths~\cite{liu2025dler,cheng2025optimizinglengthcompressionlarge}, resulting in an inherent efficiency–performance trade-off, as illustrated in Figure~\ref{fig.map.7b}.

These observations lead to a central question: 
\emph{Can we tackle length inflation in a general manner without compromising the capability gains of RL?} 
In this work, we present \textbf{Group Relative Reward Rescaling (\gr)}, a principled framework for lossless efficiency optimization. 
Rather than using additive penalties, \gr~regularizes length through \textbf{multiplicative rescaling}, which acts as a generalized gating mechanism and removes the compensatory shortcuts inherent to additive schemes.
To further ensure lossless optimization, we introduce two fine-grained mechanisms.
Specifically, we employ \textbf{group-relative regularization}, which normalizes length against on-policy statistics rather than rigid thresholds, thereby dynamically adapting the length budget to the inherent difficulty of each prompt.
Complementing this, we introduce \textbf{advantage-aware calibration} to explicitly control the penalty strength. This ensures that length regularization does not overturn the advantage signal of representative high-quality trajectories, thereby safeguarding stable optimization toward capability improvements.

Empirically, \gr~resolves the efficiency--performance trade-off inherent in prior methods. As shown in Figure~\ref{fig.map.7b}, our approach significantly reduces token usage (e.g., over $40\%$ on AIME-25) while simultaneously improving accuracy (e.g., $+8$ points), demonstrating that verbosity is not a prerequisite for intelligence. Furthermore, in RLHF settings, \gr~exhibits an adaptive length dynamic: it permits moderate growth when computation is beneficial but automatically curtails generation length as the policy matures (Figure~\ref{fig:reward_length_illustration}). This mechanism effectively mitigates reward hacking via verbosity without sacrificing capability. We will release our code and model checkpoints to support future research.

{In summary, our contributions are threefold:}
\begin{itemize}
    \item We propose \gr, a framework for lossless length control that substitutes additive penalties with multiplicative reward rescaling. This design eliminates compensatory optimization shortcuts and provides a unified mechanism for both binary and continuous rewards.
    \item We develop an optimization-preserving strategy that integrates group-relative regularization with advantage-aware calibration, adapting constraints to on-policy statistics while preserving learning signals.
    \item Across mathematical reasoning, code generation, and RLHF alignment tasks, \gr~ yields concise generations while matching standard GRPO performance, shifting the efficiency–performance Pareto frontier.
\end{itemize}

\section{Preliminary}

\subsection{Group Relative Policy Optimization}

LLM generation can be formulated as a token-level Markov Decision Process \cite{puterman1990markov}.
Given a prompt $x\sim\mathcal{D}$, an autoregressive policy $\pi_\theta$ generates a response
$y=(y_1,\dots,y_{\ell})$ of length $\ell:=|y|$ by sampling tokens from
$\pi_\theta(y_t\mid x,y_{<t})$.
A scalar reward $R(x, y)$ is defined over complete responses, and reinforcement
learning aims to maximize the expected reward:
\begin{equation}    
\max_{\pi_\theta} \;
\mathbb{E}_{x \sim \mathcal{D},\, y \sim \pi_\theta(\cdot \mid x)} \big[ R(x, y) \big].
\end{equation}

With the emergence of reasoning models such as DeepSeek-R1 \cite{guo2025deepseek},
group-style RL has become prevalent for LLM post-training.
Among them, Group Relative Policy Optimization (GRPO) \cite{shao2024deepseekmath} is widely adopted due to its
scalability and its elimination of the need for a separate value model.
For each prompt $x$, GRPO samples a group of $G$ responses $\{y^{(i)}\}_{i=1}^G$ from an old policy $\pi_{\theta_{\mathrm{old}}}(\cdot\mid x)$ and evaluates each by $R(x,y^{(i)})$.
It then constructs a group-relative advantage via within-group normalization:
\begin{equation}
\label{eq:grpo_adv}
\begin{gathered}
\hat{A}^{(i)}
=
\frac{
R(x,y^{(i)}) - \mu_R
}{
\sigma_R
}
\\[4pt]
\begin{aligned}
\mu_R
&:= \frac{1}{G}\sum_{j=1}^G R(x,y^{(j)}),
\sigma_R
:= \mathrm{std}\!\left(\{R(x,y^{(j)})\}_{j=1}^G\right)
\end{aligned}
\end{gathered}
\end{equation}

The policy is optimized using a PPO-style clipped objective over the
group-normalized advantages:
\begin{equation}
\begin{aligned}
\mathcal{J}_{\mathrm{GRPO}}(\theta)
&=
\mathbb{E}_{x \sim \mathcal{D},\, \{y^{(i)}\}_{i=1}^G}
\Bigg[
\frac{1}{G} \sum_{i=1}^G
\sum_{t=1}^{|y^{(i)}|}
\\
&\hspace{-4em}
\Bigl(
\min \big(
r_{i,t}(\theta)\hat{A}^{(i)},\,
\mathrm{clip}(r_{i,t}(\theta), 1-\varepsilon, 1+\varepsilon)\hat{A}^{(i)}
\big)
\\
&\hspace{2em}
-\beta D_{\mathrm{KL}}(\pi_\theta\Vert \pi_{\mathrm{ref}})
\Bigr)
\Bigg],
\end{aligned}
\end{equation}
where the importance sampling ratio is defined as
\begin{equation}
r_{i,t}(\theta)
=
\frac{
\pi_\theta(y^{(i)}_t \mid x, y^{(i)}_{<t})
}{
\pi_{\theta_{\mathrm{old}}}(y^{(i)}_t \mid x, y^{(i)}_{<t})
}.
\end{equation}
Notably, GRPO estimates advantages based on group-wise statistics. 
We exploit this structural property to construct a more on-policy length regularization scheme and to formulate advantage-aware calibration that better respect the underlying optimization signal, as detailed in Section \ref{sec.method}.

\begin{table}[t]
\centering
\small
\renewcommand{\arraystretch}{1.5} 
\setlength{\tabcolsep}{7pt} 
\caption{Comparison of length-aware reward shaping methods \cite{liu2025learn,arora2025training,aggarwal2025l1,team2025kimi,yu2025dapo,cheng2025optimizinglengthcompressionlarge}.
$\ell^{(i)}$ is the response length of sample $i$.
$\ell_{\min}$, $\ell_{\max}$, $\bar{\ell}$, and $\sigma_{\ell}$ are computed within each group.
$\alpha$, $\lambda$, $\ell_T$ and $\ell_C$ are fixed hyperparameters.
$s(\cdot)$ denotes the sigmoid function.
}
\label{tab:length_shaping}
\vspace{0.1in}
\begin{tabular}{l c} 
\toprule
\textbf{Method} & $\boldsymbol{S}$ \textbf{(Length shaping term)} \\
\midrule

\multicolumn{2}{c}{\textbf{Additive:} \quad $\hat{R}^{(+)}=R+\lambda \cdot S$}\ \  \\
\midrule
L1-Exact & $-\lvert \ell^{(i)}-\ell_T\rvert$ \\
DAPO & $
\begin{dcases}
0, & \ell^{(i)} \le \ell_T-\ell_C, \\
\frac{\ell_T-\ell_C-\ell^{(i)}}{\ell_C}, & \ell_T-\ell_C < \ell^{(i)} \le \ell_T, \\
-1, & \ell^{(i)} > \ell_T
\end{dcases}
$ \\
\addlinespace[4pt] 
Kimi-k1.5 & $
\begin{dcases} 
0.5-\frac{\ell^{(i)}-\ell_{\min}}{\ell_{\max}-\ell_{\min}}, & R=1, \\
\min\left(0.5-\frac{\ell^{(i)}-\ell_{\min}}{\ell_{\max}-\ell_{\min}},\,0\right), & R=0
\end{dcases}
$ \\
Truncation & $-\mathbb{I}(R=1) \cdot\mathbb{I}(\ell^{(i)}>\ell_T)$ \\
Efficiently & $-\mathbb{I}(R=1) \cdot s\left((\ell^{(i)}-\bar{\ell})/\sigma_{\ell} \right)$ \\
LC-R1      & $\mathbb{I}(R=1) \cdot(1-\ell^{(i)}/\ell_{\max}) $ \\


\midrule
\multicolumn{2}{c}{\textbf{Multiplicative:} \quad $\hat{R}^{(\times)}=R\cdot S$} \ \ \\
\midrule
\raisebox{-0.5ex}{\gr~\textit{(Ours)}} & $\dfrac{1}{1+\alpha\cdot \frac{\ell^{(i)}}{\bar{\ell}}}$ \\ 

\bottomrule
\end{tabular}
\end{table}

\subsection{Length-Regularized Reinforcement Learning}

Despite the substantial performance gains brought by reinforcement learning (RL), a critical failure mode has become increasingly apparent, which we refer to as \textit{length inflation}. Long-CoT models are especially susceptible to overthinking \cite{chen2024not,luo2025o1}.
In parallel, reward hacking \cite{gao2023scaling,eisenstein2023helping} in RLHF can also lead to explosive growth in response length.

A common strategy for mitigating length inflation in reinforcement learning is to explicitly regularize response length through reward shaping.
From a unified perspective~\cite{liu2025learn}, Most existing approaches can be instantiated as \textit{additive shaping}:
\begin{align} \label{eq:additive_shaping}
\textbf{Additive:}\quad & \hat{R}^{(+)}
=
R + \lambda \cdot S,\ \ \lambda> 0
\end{align}
where $R$ is the task reward, $S$ is a length-dependent shaping signal, and $\lambda$ controls the strength of length regularization.
Here, $\hat{R}^{(+)}$ denotes the shaped reward used for efficient policy optimization.
We present several representative examples for illustration, as shown in Table~\ref{tab:length_shaping}.

Under this formulation, existing approaches mainly differ in how the length regularizer $S$ is instantiated.
The most basic strategy typically relies on a fixed threshold (e.g., $\ell_T = 4K$): once the response length exceeds this limit, the model incurs either a constant penalty \cite{liu2025dler} or a progressively increasing one \cite{aggarwal2025l1}.
A more principled paradigm instead leverages group-level statistics \cite{team2025kimi} to determine the strength of the penalty.
In addition, some methods introduce gating mechanisms (e.g., $\mathbb{I}(R=1)$) \cite{arora2025training}, which activate length regularization only for successful trajectories, so as to prevent the model from over-optimizing for brevity.

While these methods vary in their specific instantiations, they share a common objective: compressing response length during RL training.
Empirically, however, such regularization often leads to performance degradation, motivating a closer examination of reward shaping design.



\section{Group Relative Reward Rescaling} \label{sec.method}

In this section, we introduce \textbf{Group Relative Reward Rescaling (\gr)}, a principled framework designed to mitigate length inflation without compromising capabilities. 

Formally, for a response $y^{(i)}$ with length $\ell^{(i)} = |y^{(i)}|$ within a group of $G$ samples, \gr~defines the rescaled reward as:
\begin{equation}
\label{eq:gr3_main}
\hat{R}(x,y^{(i)}) = R(x,y^{(i)}) \cdot \underbrace{\frac{1}{1 + \alpha \cdot \frac{\ell^{(i)}}{\bar{\ell}}}}_{S^{(i)}}
\end{equation}
Here, $S^{(i)}$ represents the length scaling factor, and $\bar{\ell}$ denotes the average response length within the group.

This formulation reflects a unified design along three dimensions. We adopt \textbf{multiplicative reward rescaling} (Section~\ref{sec:multiplicative}) as a reward-dependent gate, mitigating the compensation effect while remaining applicable to general continuous reward distributions. We instantiate the scaling factor via \textbf{group-relative regularization} (Section~\ref{sec:group_relative}), which utilizes the on-policy average $\bar{\ell}$ to dynamically adapt the length budget to the inherent difficulty of the prompt.
To maintain stable optimization, \textbf{advantage-aware calibration} (Section~\ref{sec.calibration}) further controls penalty strength, preventing the undue suppression of high-advantage trajectories.

\begin{figure}[t]
\centering
\centerline{\includegraphics[width=0.5\textwidth]{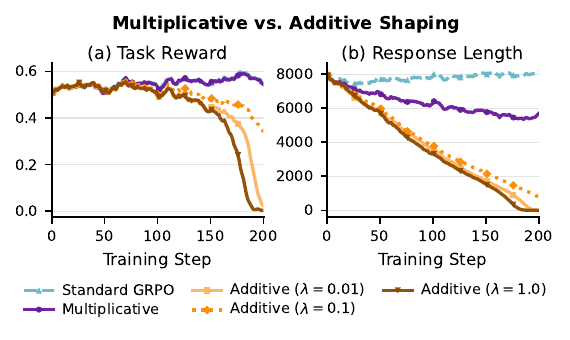}}
\vskip -0.1in
\caption{Additive length regularization consistently degrades task reward across different choices of $\lambda$, whereas the multiplicative scheme maintains stable reward improvement throughout training.} \label{fig.additive}
\vskip -0.1in
\end{figure}

\subsection{Multiplicative Reward Rescaling} \label{sec:multiplicative}

\subsubsection{Motivation and Formulation}

The most naive length control methods rely on additive reward shaping as Eq.~\ref{eq:additive_shaping}, where $\lambda$ controls the strength of the length penalty.
However, \textbf{additive shaping introduces an inherent {compensatory effect}}: the shaping term $S$ forms an auxiliary optimization objective that can be exploited independently of task performance. 
Several works mitigate this issue via gating mechanisms, e.g., incorporating $I(R=1)$ into $S$. Such designs, however, are restricted to binary rewards and do not extend to continuous reward settings.

Departing from these additive formulations, we propose a more general \textit{multiplicative shaping} paradigm:
\begin{align} \label{eq:multi_shaping}
\textbf{Multiplicative:}\quad & \hat{R}^{(\times)}
=
R \cdot S,\ \
\end{align}
which can be interpreted as a continuous extension of heuristic gating, eliminating the trade-off coefficient $\lambda$ and naturally generalizing to arbitrary reward scales~\footnote{In Appendix~\ref{app:gating_connection}, we discuss the connection between heuristic gating and multiplicative rescaling, and show that heuristic gating fails in continuous-reward RLHF settings.}.
Crucially, the multiplicative formulation removes the compensatory property of additive shaping, requiring the policy to jointly optimize task performance and length control.
As shown in Figure \ref{fig.additive}, additive shaping exhibits a systematic failure mode: for any choice of $\lambda$, the optimization is dominated by rapid length reduction, resulting in severe performance degradation. Multiplicative shaping, by contrast, does not admit such a shortcut and therefore avoids this collapse.

Intuitively, multiplicative shaping couples the influence of length control to the task reward through
\begin{equation}
\label{eq:grad_intuition}
\frac{\partial \hat{R}^{(\times)}}{\partial S} = R.
\end{equation}
Thus, length regularization automatically strengthens with task success, making the shaping intrinsically reward-aware.

\subsubsection{Analysis under Group-Normalized Advantage}

We further refine the above intuition under the group-normalized advantage adopted in GRPO (Eq.~\ref{eq:grpo_adv}).
Fix a prompt $x$, and consider the within-group distribution over random variables $(R,S)$ induced by sampling responses $y \sim \pi_{\theta_{\mathrm{old}}}(\cdot \mid x)$, where $R \in [0,1]$ denotes the task reward and $S \in [0,1]$ denotes a length-related score (e.g., Eq.~\ref{eq:gr3_main}).

We compare additive shaping $\hat{R}^{(+)}$ (Eq.~\ref{eq:additive_shaping}) and multiplicative shaping $\hat{R}^{(\times)}$ (Eq.~\ref{eq:multi_shaping}).
Let $\mu_{\cdot}$ and $\sigma_{\cdot}$ denote within-group mean and standard deviation.
Results are stated using population moments; empirical versions follow by substituting sample averages.

\begin{proposition}[Additive shaping: linear injection of the length signal]
\label{prop:additive-decomp_refined}
Let $(R,S)$ have finite second moments and define
$\mu_R=\mathbb{E}[R]$, $\mu_S=\mathbb{E}[S]$,
$\sigma_R^2=\mathrm{Var}(R)$, $\sigma_S^2=\mathrm{Var}(S)$, and $\sigma_{RS}=\mathrm{Cov}(R,S)$.
For $\hat{R}^{(+)} = R + \lambda S$ with $\lambda> 0$,
\begin{equation}
\label{eq:add_centered}
\hat{R}^{(+)}-\mathbb{E}[\hat{R}^{(+)}] = (R-\mu_R) + \lambda(S-\mu_S),
\end{equation}
\begin{equation}
\label{eq:add_var}
\mathrm{Var}(\hat{R}^{(+)})
= \sigma_R^2 + \lambda^2 \sigma_S^2 + 2\lambda \sigma_{RS},
\end{equation}
and hence
\begin{equation}
\label{eq:add_adv}
A\!\left(\hat{R}^{(+)}\right)
=
\frac{(R-\mu_R) + \lambda(S-\mu_S)}
{\sqrt{\sigma_R^2 + \lambda^2 \sigma_S^2 + 2\lambda \sigma_{RS}}}.
\end{equation}
Therefore, the length-related signal $(S-\mu_S)$ is \emph{linearly} injected into the advantage with fixed weight $\lambda$,
and can contribute even when $R$ provides little discriminative signal within the group.
\end{proposition}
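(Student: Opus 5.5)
The argument is direct computation from linearity of expectation and bilinearity of covariance; nothing from earlier in the paper beyond the definition of the group-normalized advantage in Eq.~\eqref{eq:grpo_adv} is needed. We read that definition at the population level: $A(Z)=(Z-\mathbb{E}[Z])/\sqrt{\mathrm{Var}(Z)}$ for a within-group random variable $Z$ with finite second moment. The finite-second-moment assumption on $(R,S)$ guarantees that all expectations, variances and the covariance $\sigma_{RS}$ exist, for instance by Cauchy--Schwarz for the cross term.

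First, for the centering identity \eqref{eq:add_centered} we use linearity of expectation: $\mathbb{E}[\hat{R}^{(+)}]=\mu_R+\lambda\mu_S$. Subtracting this from $\hat{R}^{(+)}=R+\lambda S$ and regrouping gives $(R-\mu_R)+\lambda(S-\mu_S)$.

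Second, for the variance \eqref{eq:add_var} we square the centered expression and take expectations:
\[
\mathbb{E}\big[(R-\mu_R)^2\big]+\lambda^2\,\mathbb{E}\big[(S-\mu_S)^2\big]+2\lambda\,\mathbb{E}\big[(R-\mu_R)(S-\mu_S)\big].
\]
The three expectations are, by definition, $\sigma_R^2$, $\sigma_S^2$ and $\sigma_{RS}$.

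Third, substituting both pieces into the definition of $A(\cdot)$ yields \eqref{eq:add_adv}. The only point needing care is that the denominator must be nonzero. I would state explicitly that the formula is meant under the nondegeneracy condition $\mathrm{Var}(\hat{R}^{(+)})>0$, mirroring GRPO's convention; in practice a small $\epsilon$ is added to the denominator. This is the mild obstacle, and it is a matter of hypothesis rather than of proof.

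Finally, the interpretive sentence is justified by reading off \eqref{eq:add_adv}. The numerator is an affine combination in which $(S-\mu_S)$ enters with the fixed coefficient $\lambda$, and this coefficient does not depend on $R$. To make "contributes even when $R$ provides little discriminative signal" concrete, take the limit $\sigma_R\to 0$. Then $R-\mu_R=0$ almost surely and $\sigma_{RS}=0$, again by Cauchy--Schwarz, so the advantage reduces to $(S-\mu_S)/\sigma_S$. In that regime the advantage is purely the standardized length signal, independent of task success.
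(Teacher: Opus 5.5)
Your proposal is correct and follows essentially the same route as the paper: linearity of expectation for the centering identity, the variance-of-a-sum identity (which you derive by expanding the square) for the denominator, and substitution into the GRPO advantage definition. The nondegeneracy caveat and the $\sigma_R = 0$ illustration are sound additions the paper leaves implicit: there Cauchy--Schwarz forces $\sigma_{RS}=0$, and the advantage reduces to $(S-\mu_S)/\sigma_S$.
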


\begin{proof}
$\mathbb{E}[\hat{R}^{(+)}]=\mu_R+\lambda\mu_S$ gives Eq.~\eqref{eq:add_centered}.
Eq.~\eqref{eq:add_var} follows from $\mathrm{Var}(X+Y)=\mathrm{Var}(X)+\mathrm{Var}(Y)+2\mathrm{Cov}(X,Y)$
with $X=R$ and $Y=\lambda S$.
Eq.~\eqref{eq:add_adv} follows by applying Eq.~\eqref{eq:grpo_adv} with $R$ replaced by $\hat{R}^{(+)}$.
\end{proof}

\begin{proposition}[Multiplicative shaping: reward-weighted length signal]
\label{prop:mult-decomp_refined}
Let $(R,S)$ have finite second moments and define
$\mu_R=\mathbb{E}[R]$, $\mu_S=\mathbb{E}[S]$,
$\sigma_R^2=\mathrm{Var}(R)$, $\sigma_S^2=\mathrm{Var}(S)$, and $\sigma_{RS}=\mathrm{Cov}(R,S)$.
For multiplicative shaping $\hat{R}^{(\times)} = RS$,
\begin{equation}
\label{eq:mult_mean}
\mathbb{E}[\hat{R}^{(\times)}] = \mathbb{E}[RS] = \mu_R\mu_S + \sigma_{RS}.
\end{equation}
Moreover, the centered shaped reward admits the decomposition
\begin{equation}
\label{eq:mult_centered}
RS - \mathbb{E}[RS]
=
R(S-\mu_S) + \mu_S(R-\mu_R) - \sigma_{RS}.
\end{equation}
Consequently, the group-normalized advantage can be written as
\begin{equation}
\label{eq:mult_adv}
A\!\left(\hat{R}^{(\times)}\right)
=
\frac{
R(S-\mu_S) + \mu_S(R-\mu_R) - \sigma_{RS}
}{
\sqrt{\mathrm{Var}(RS)}
}.
\end{equation}
\end{proposition}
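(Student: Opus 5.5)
The argument parallels the proof of Proposition~\ref{prop:additive-decomp_refined}: it is pure algebra on population moments, carried out in three steps matching Eqs.~\eqref{eq:mult_mean}--\eqref{eq:mult_adv}. Finite second moments of $R$ and $S$ guarantee, via Cauchy--Schwarz, that $\mathbb{E}[RS]$ is finite, so every quantity below is well defined.

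First, for Eq.~\eqref{eq:mult_mean}, we use the definition $\sigma_{RS}=\mathrm{Cov}(R,S)=\mathbb{E}[RS]-\mu_R\mu_S$ and rearrange. This gives $\mathbb{E}[RS]=\mu_R\mu_S+\sigma_{RS}$ directly.

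Second, for Eq.~\eqref{eq:mult_centered}, we verify the identity by expanding its right-hand side. We have
\begin{equation*}
R(S-\mu_S)+\mu_S(R-\mu_R)-\sigma_{RS} = RS - R\mu_S + \mu_S R - \mu_R\mu_S - \sigma_{RS} = RS-(\mu_R\mu_S+\sigma_{RS}).
\end{equation*}
By the first step, the last expression equals $RS-\mathbb{E}[RS]$. The decomposition is chosen so that the length deviation $S-\mu_S$ appears multiplied by the random reward $R$, rather than by a constant as in Eq.~\eqref{eq:add_centered}. This is the structural point the proposition is meant to highlight.

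Third, for Eq.~\eqref{eq:mult_adv}, we apply the group normalization of Eq.~\eqref{eq:grpo_adv} with $R$ replaced by $\hat{R}^{(\times)}=RS$. The numerator is the centered reward $RS-\mathbb{E}[RS]$, which we substitute from the second step. The denominator is $\sqrt{\mathrm{Var}(RS)}$, which we leave unexpanded. Expanding it in general would require fourth-order mixed moments such as $\mathbb{E}[R^2S^2]$, and the statement deliberately avoids them.

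The only point needing care is the implicit assumption that $\mathrm{Var}(RS)>0$, so that the advantage is defined. We would note this in the same way GRPO implicitly assumes $\sigma_R>0$; when the variance is zero, the advantage is taken to be zero by convention. No step is a real obstacle, since the whole proof amounts to bookkeeping with the covariance identity.
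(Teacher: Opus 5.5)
Your proposal is correct and matches the paper's proof. The mean follows from the covariance identity, and the advantage is Eq.~\eqref{eq:grpo_adv} applied to $RS$. The centered decomposition is the same algebra run in the opposite direction: the paper writes $RS = \mu_R\mu_S + \mu_S(R-\mu_R) + R(S-\mu_S)$ and subtracts $\mathbb{E}[RS]$, while you expand the right-hand side.

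One small correction: you claim finite second moments make ``every quantity below'' well defined, but finiteness of $\mathrm{Var}(RS)$ needs $\mathbb{E}[R^2S^2]<\infty$, which second moments alone do not guarantee. This is harmless here, since $R,S\in[0,1]$ in the paper's setting.
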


\begin{proof}
Eq.~\eqref{eq:mult_mean} follows from $\mathbb{E}[RS]=\mathbb{E}[R]\mathbb{E}[S]+\mathrm{Cov}(R,S)$.
For Eq.~\eqref{eq:mult_centered}, rewrite
\[
RS
=
R\mu_S + R(S-\mu_S)
=
\mu_R\mu_S + \mu_S(R-\mu_R) + R(S-\mu_S),
\]
and subtract $\mathbb{E}[RS]=\mu_R\mu_S+\sigma_{RS}$.
Eq.~\eqref{eq:mult_adv} is Eq.~\eqref{eq:grpo_adv} applied to $\hat{R}^{(\times)}$.
\end{proof}


\begin{remark}[Why multiplicative shaping is reward-aware under group normalization]
\label{rem:reward_aware}
Under additive shaping, Proposition~\ref{prop:additive-decomp_refined} shows that the length deviation $(S-\mu_S)$ is injected into the centered shaped reward with a fixed coefficient $\lambda$.
This creates a compensatory degree of freedom: the policy can improve the shaped reward by manipulating $S$ even when $R$ provides little learning signal.

In contrast, Proposition~\ref{prop:mult-decomp_refined} yields the decomposition
\[
RS-\mathbb{E}[RS] = R(S-\mu_S) + \mu_S(R-\mu_R) - \sigma_{RS},
\]
where the influence of length deviation is scaled by $R$ itself. Consequently, length control is weak when rewards are low and strengthens as task performance improves, making multiplicative shaping inherently reward-aware.
\end{remark}

\begin{table}[t]
\centering
\small
\caption{
Comparison of standard RLVR training, threshold-based truncation~\cite{hou2025thinkprune}, and group-relative methods \cite{arora2025training}. All methods are evaluated at the 800th step.
}
\label{tab:truncation}
  \vspace{0.1in}
\begin{tabular}{lcccc}
\toprule
Model
& \multicolumn{2}{c}{AIME24}
& \multicolumn{2}{c}{MATH500} \\
\cmidrule(lr){2-3}\cmidrule(lr){4-5}
& Score$\uparrow$ & \#Token$\downarrow$
& Score$\uparrow$ & \#Token$\downarrow$ \\
\midrule

\multicolumn{5}{l}{\textit{Standard}} \\
GRPO
& 41.0 & 12885
& 87.2 & 5788 \\


\midrule
\multicolumn{5}{l}{\textit{Threshold-based}} \\
Truncation--4k
& 32.5 & 3419
& 86.5 & 1438 \\
Truncation--8k
& 34.8 & 6684
& 87.6 & 2918 \\

\midrule
\multicolumn{5}{l}{\textit{Group-relative}} \\
Efficiently-0.01 & 41.4 & 9544 & 76.0 & 1349 \\
Efficiently-0.2 & 41.3 & 10412 & 60.5 & 1265 \\

\gr~\textit{(ours)}
& 44.0 & 8760
& 88.7 & 2405 \\

\bottomrule
\end{tabular}

\end{table}

\subsection{Group Relative Length Regularization} \label{sec:group_relative}

Many prior works address length inflation by imposing absolute length thresholds~\cite{yu2025dapo,liu2025learn,liu2025dler}, penalizing trajectories that exceed a fixed budget. However, such designs may suppress necessary reasoning on hard instances, making the policy insensitive to task difficulty and degrading performance. More importantly, \textbf{a fixed threshold inevitably induces {off-policy bias}}: the optimal reasoning length varies across tasks and shifts during training, which a single global constant cannot capture.

To overcome these limitations, we propose a group-relative length regularization strategy that adapts to on-policy behavior. Following Eq.~\ref{eq:gr3_main}, we define a bounded length-shaping term $S^{(i)} \in (0,1)$ using within-group statistics:
\begin{equation}
\label{eq:gr3_S}
S^{(i)}
=
\frac{1}{1 + \alpha \cdot \frac{\ell^{(i)}}{\bar{\ell}}},
\qquad \alpha>0.
\end{equation}
where $\ell^{(i)}$ is the response length and $\bar{\ell}$ is the group mean. This penalty decreases smoothly with length, while normalizing against $\bar{\ell}$ avoids arbitrary global thresholds and adapts to the model's current generation behavior.

As shown in Table~\ref{tab:truncation}, we include the fixed-threshold truncation method~\cite{hou2025thinkprune} as a minimalist baseline. We find that threshold-based truncation imposes a uniform maximum response length even on difficult benchmarks, which degrades reasoning performance on challenging problems.
We also compare against other group-relative methods and find that certain shaping strategies~\cite{arora2025training} introduce biases that favor shallow reasoning on easier benchmarks (see Appendix~\ref{app:difficulty_overadapt} for analysis).
We additionally evaluated another group-relative method, Kimi-k1.5~\cite{team2025kimi}, but it exhibited training collapse; therefore, we omit its results. 
We attribute this failure to the additive shaping paradigm without gating, as discussed in Section~\ref{sec:multiplicative}.


\subsection{Advantage-Aware Calibration} \label{sec.calibration}

Within the framework of group-relative policy optimization, the length penalty term $S$ acts as a powerful shaper of the advantage landscape. The interaction between the penalty strength and group normalization is non-trivial: \textbf{slight shifts in $S$ can noticeably redirect the optimization trajectory}. In practice, an unconstrained or overly strong penalty may penalize high-quality responses so heavily that it creates a contradictory signal where the model is discouraged from generating its best responses.

A natural but overly strict objective is to require that \emph{all} high-quality trajectories retain positive advantages. However, this becomes intractable under \textit{high reward density}, where most responses in a group achieve the maximum reward $R_{\max}$ (e.g., 15 out of 16 are correct). Due to the zero-sum structure of group normalization, correct but above-average-length responses may inevitably receive negative advantages. We provide a formal analysis of this limitation in Appendix~\ref{app:density_1}.


\paragraph{Average-Case Advantage Preservation.}
Instead of protecting the longest outlier trajectory, we aim to preserve the advantage of a representative high-quality response. Specifically, we consider a hypothetical response that achieves the group-wise maximum reward $R_{\max}$ with the group-average length $\bar{\ell}$, and require its advantage to remain non-negative. Let $\mu_{\hat{R}}$ denote the mean regularized reward in the group. This yields the condition:
\begin{equation}
\frac{R_{\max}}{1 + \alpha \cdot \frac{\bar{\ell}}{\bar{\ell}}} \geq \mu_{\hat{R}} \quad\Longrightarrow\quad \frac{R_{\max}}{1 + \alpha} \geq \mu_{\hat{R}}
\label{eq:constraint}
\end{equation}
This ensures that the penalty $\alpha$ does not overturn the advantage of a typical high-quality response.
In the limiting case where all trajectories in a group achieve $R_{\max}$, the average-case constraint can still fail to hold. We therefore filter out such groups online (see Appendix~\ref{app:density_2}).

In practice, Eq.~(\ref{eq:constraint}) is not enforced as a hard constraint at every update due to on-policy sampling stochasticity. Instead, we treat it as a calibration criterion for selecting the penalty coefficient $\alpha$.
We run a short calibration phase at the beginning of GRPO training and measure the \emph{Constraint Satisfaction Rate} (CSR) over candidate $\alpha$ values. We then select the largest $\alpha$ whose CSR remains consistently high (e.g., $\geq 99.9\%$), ensuring high-probability constraint satisfaction while maintaining strong length regularization.

Empirically, the $\alpha$ selected by this protocol maintains a near-perfect CSR throughout training (see Figure~\ref{fig.reward_gap}). This point effectively marks a practical boundary: the reward gap relative to the GRPO baseline is already positive, indicating preserved task capability. Further reducing the penalty strength (i.e., decreasing $\alpha$) yields no consistent performance gains, and instead results in fluctuations consistent with training variance.

\begin{table*}[t]
\centering
\small
\caption{Mathematical reasoning performance for 7B models.
Length-oriented methods reduce tokens but may sacrifice accuracy, while \textbf{\gr~consistently achieves comparable accuracy with significantly fewer tokens, establishing a markedly better Pareto frontier.}
}
\label{tab:len_perf_7b}
\vspace{0.1in}
\begin{tabular}{lcccccccc}
\toprule
& \multicolumn{2}{c}{AIME24} & \multicolumn{2}{c}{AIME25} & \multicolumn{2}{c}{AMC23} & \multicolumn{2}{c}{MATH500} \\
\cmidrule(lr){2-3}\cmidrule(lr){4-5}\cmidrule(lr){6-7}\cmidrule(lr){8-9}
Model
& Avg@32$\uparrow$ & \#Token$\downarrow$
& Avg@32$\uparrow$ & \#Token$\downarrow$
& Avg@16$\uparrow$ & \#Token$\downarrow$
& Avg@4$\uparrow$  & \#Token$\downarrow$ \\
\midrule
\multicolumn{9}{l}{\textit{Initial model}} \\
DeepSeek--R1--Distill--7B
& 52.4 & 13213 & 39.4 & 14032 & 89.8 & 6385 & 92.1 & 3994 \\
\midrule
\multicolumn{9}{l}{\textit{Length-oriented RL}} \\
LCR1--7B
& 47.9 & 7548 & 36.3 & 7960 & 85.8 & 2963 & 89.1 & 1546 \\
Laser--DE--L4096--7B
& 51.7 & 4931 & 36.7 & 5048 & 88.1 & 2427 & 92.4 & 1580 \\
AdaptThink--7B
& 51.5 & 11070 & 39.3 & 11678 & 88.1 & 4280 & 90.6 & 2011 \\
DLER--R1--7B
& 49.5 & 3272 & 35.6 & 3288 & 91.4 & 2255 & 93.2 & 1650 \\
\midrule
\multicolumn{9}{l}{\textit{Performance-oriented RL}} \\
GRPO
& 57.1 & 11079 & 44.7 & 12540 & 90.3 & 7256 & 92.1 & 5006 \\
\gr~\textit{(ours)}
& 60.1 & 7923 & 46.9 & 8582 & 93.0 & 3090 & 94.0 & 1764 \\
\bottomrule
\end{tabular}
\end{table*}

\begin{figure}[t]
\centering
\makebox[\textwidth][l]{%
    \includegraphics[width=0.462\textwidth]{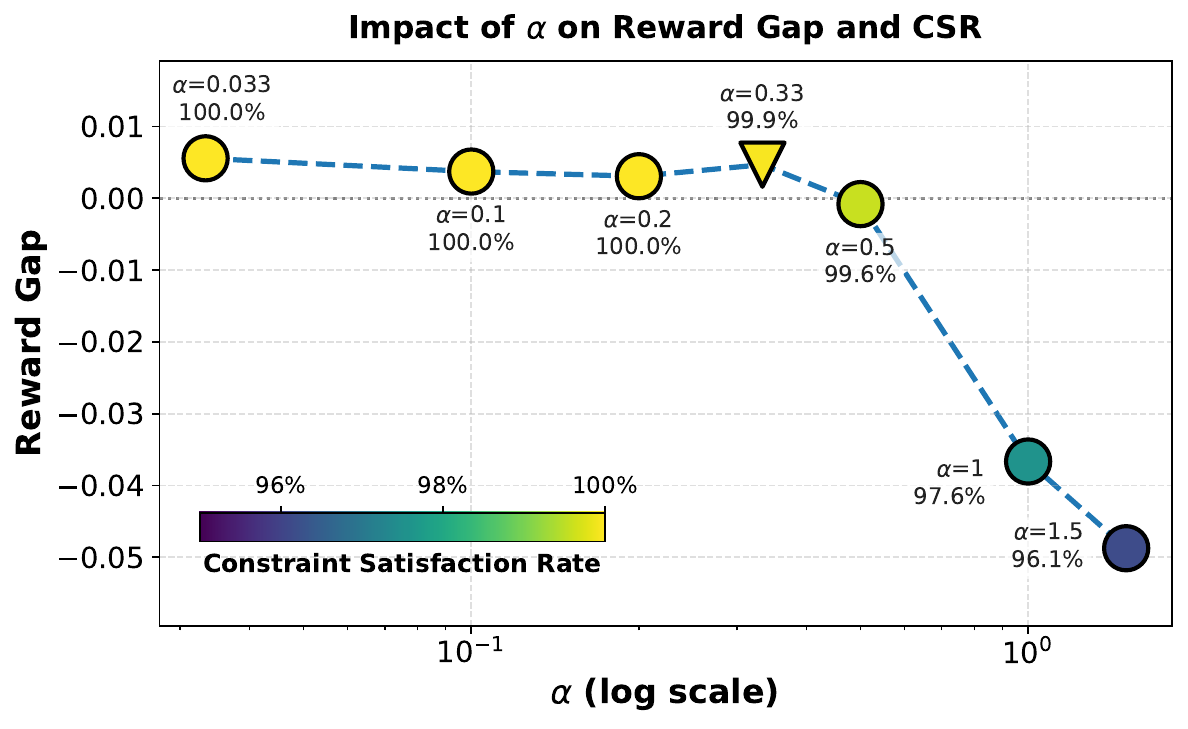}%
}
\vskip -0.1in
\caption{Sensitivity of $\alpha$: reward gap relative to the standard GRPO baseline versus $\alpha$ (log scale). Marker color indicates the average CSR measured during actual training, while the triangle marker denotes the value of $\alpha$ selected during the calibration phase.} \label{fig.reward_gap}
\vskip -0.1in
\end{figure}

\begin{table}[t]
\centering
\small
\caption{Performance on code generation tasks. 
\textbf{\gr~achieves competitive scores with fewer tokens across model sizes.}}
\label{tab:qwen3_code_tradeoff}
\vspace{0.1in}
\begin{tabular}{lcccc}
\toprule
& \multicolumn{2}{c}{LiveCodeBench v6} & \multicolumn{2}{c}{MultiPL-E} \\
\cmidrule(lr){2-3}\cmidrule(lr){4-5}
Model
& Score$\uparrow$ & \#Token$\downarrow$
& Score$\uparrow$ & \#Token$\downarrow$ \\
\midrule

\multicolumn{5}{c}{\textit{DeepSeek–R1–Distill–1.5B}} \\
\midrule
Initial
& 17.7 & 12665 & 45.1 & 6181 \\
GRPO
& 23.4 & 11830 & 51.5 & 6589 \\
\gr~\textit{(ours)}
& 24.9 & 8538 & 52.2 & 2414 \\
\midrule

\multicolumn{5}{c}{\textit{DeepSeek–R1–Distill–7B}} \\
\midrule
Initial
& 37.7 & 11496 & 69.7 & 4121 \\
GRPO
& 42.4 & 10956  & 71.1 & 4794 \\
\gr~\textit{(ours)}
& 41.6 & 7504  & 70.9 & 2127 \\
\bottomrule
\end{tabular}
\end{table}

\begin{table}[t]
\centering
\small
\caption{Performance on chat tasks.
GRPO improves performance but incurs substantial length bias, while \textbf{\gr~achieves stronger alignment gains while preserving the length of the initial policy}.}
\label{tab:qwen3_arena_alpaca_tradeoff}
\vspace{0.1in}
\begin{tabular}{lcccc}
\toprule
& \multicolumn{2}{c}{Arena--Hard--Auto} & \multicolumn{2}{c}{Alpaca--Eval} \\
\cmidrule(lr){2-3}\cmidrule(lr){4-5}
Model
& Score$\uparrow$ & \#Token$\downarrow$
& Score$\uparrow$ & \#Token$\downarrow$ \\
\midrule

\multicolumn{5}{c}{\textit{Qwen3--4B}} \\
\midrule
Initial
& 66.6 & 1139 & 40.1 & 737 \\
GRPO
& 85.8 & 2374 & 33.9 & 1993 \\
\gr~\textit{(ours)}
& 85.9 & 1377 & 44.1 & 859 \\
\midrule

\multicolumn{5}{c}{\textit{Qwen3--8B}} \\
\midrule
Initial
& 77.2 & 1171 & 50.2 & 778 \\
GRPO
& 90.6 & 2343 & 53.5 & 1670 \\
\gr~\textit{(ours)}
& 92.8 & 1178 & 55.8 & 765 \\
\bottomrule
\end{tabular}
\end{table}

\section{Experiments}

\subsection{Setup}

\paragraph{Efficient Reasoning for RLVR.}
Following prior work, we adopt DeepSeek-R1-Distill-1.5B and DeepSeek-R1-Distill-7B \cite{guo2025deepseek} as the base models.
For mathematical reasoning, we use the {DeepScaleR-Preview-Dataset} \cite{deepscaler2025} as
training data, 
and include open-sourced checkpoints of existing efficient-reasoning methods
as baselines, including LC-R1 \cite{cheng2025optimizinglengthcompressionlarge}, Laser \cite{liu2025learn}, AdaptThink \cite{zhang2025adaptthink}, and DLER \cite{liu2025dler}.
To demonstrate the generality of our approach, we further extend to code generation, using the prompts from DeepDistill~\cite{tian2025deepdistill}. 
 
\paragraph{Mitigating Length Bias in RLHF.}
As for the RLHF setting, we use the {non-reasoning} versions of {Qwen3-4B} and
{Qwen3-8B} \cite{yang2025qwen3} as base models.
We construct RL prompts from {arena-human-preference-140k}\footnote{\url{https://huggingface.co/datasets/lmarena-ai/arena-human-preference-140k}}, and employ
{Skywork-Reward-V2-Llama-3.1-8B} \cite{liu2025skywork} as the reward model.
To improve training stability, we apply a reference-based sigmoid shaping \cite{fu2025reward} scheme:
\begin{equation}
R(x,y^{(i)})=s\!\left(R_{\text{origin}}(x,y^{(i)})-R_{\text{origin}}(x,y^{\text{ref}})\right).
\end{equation}
where $R_{\text{origin}}(\cdot)$ denotes the raw reward model score. 

Detailed experimental settings are provided in Appendix~\ref{app.exp_settings}.

\subsection{Main Results}

\subsubsection{Efficient Reasoning for RLVR}

The experimental results for the 7B model are presented in Table~\ref{tab:len_perf_7b}, while those for the 1.5B model are provided in Appendix~\ref{sec.additional_1.5b}.
Notably, \textbf{\gr~improves reasoning performance while reducing generation length, indicating a genuine efficiency gain rather than a trade-off.}

Within the performance-oriented regime, compared with standard GRPO, \gr~leads to substantially shorter generations while maintaining or even improving performance.
For instance, at the 7B scale on AIME24, \gr~reduces the average length from 13,213 to 7,923 tokens while improving Avg@32 from 52.4 to 60.1, whereas GRPO only reaches 57.1.
In contrast to existing length-oriented baselines, \gr~does not over-compress the reasoning length at the cost of accuracy; instead, it prioritizes preserving performance while removing redundant reasoning.
For example, on AIME25 (7B), none of the length-oriented baselines surpasses the initial checkpoint performance (39.4), whereas \gr~improves it to 46.9 with much fewer tokens (14,032 $\rightarrow$ 8,582).
\textbf{This suggests that \gr~encourages more efficient reasoning trajectories rather than merely truncating reasoning, yielding consistent gains across scales.}


We further validate \gr~on code generation benchmarks, as summarized in Table~\ref{tab:qwen3_code_tradeoff}. Consistent with our findings in mathematical reasoning, \gr~achieves substantial efficiency gains while preserving task performance.


\subsubsection{Mitigating Length Bias in RLHF}


Results on alignment benchmarks are shown in Table~\ref{tab:qwen3_arena_alpaca_tradeoff}.
Compared with the initial models, RLHF training yields substantial improvements in chat quality. However, we observe that standard GRPO suffers from severe reward hacking under length bias, where the model can artificially increase reward by generating unnecessarily long responses, resulting in explosive length inflation (e.g., on Qwen3-8B, the average response length on Arena--Hard--Auto increases from 1,171 to 2,343 tokens).
In contrast, \textbf{\gr~attains comparable or even stronger alignment gains while keeping response length almost unchanged, effectively decoupling performance improvement from verbosity}. For instance, on Qwen3-8B, \gr~improves the Arena--Hard--Auto score from 77.2 to 92.8, while the token cost only increases marginally (1,171 $\rightarrow$ 1,178).

We further visualize the training dynamics in the RLHF setting in Figure~\ref{fig:reward_length_illustration}. Under GRPO, response length grows monotonically and uncontrollably throughout training. In contrast, \gr~exhibits a clear ``increase-then-decrease'' pattern: the model initially expands its reasoning to secure alignment improvements, and subsequently compresses redundant generations once performance stabilizes. This dynamic behavior aligns with our design intuition---\textbf{\gr~prioritizes achieving reliable alignment gains first, and then progressively improves response efficiency by suppressing length-based exploitation}.

\subsection{Analysis and Discussion}

\subsubsection{Ablation on Penalty Strength $\alpha$}

We study the effect of the penalty coefficient $\alpha$ by sweeping its value while keeping all other settings fixed. Detailed results and analysis are provided in Appendix \ref{sec.additional_ablation}; here we summarize the key findings.

When $\alpha$ is too large (e.g., 1.0), \gr~degenerates toward a naive length regularizer: responses become much shorter, but performance gains over the base model are limited, as optimization is dominated by compression rather than capability improvement.
As $\alpha$ decreases, response length grows smoothly, while task performance first improves and then plateaus. This trend is consistent with the analysis in Section \ref{sec.calibration}: \textbf{once the advantage of representative high-quality trajectories is preserved, further reducing the penalty mainly relaxes length control without yielding stronger learning signals.} The chosen value $\alpha = 0.33$ lies near this transition region, achieving substantial length reduction while retaining most performance gains.

\subsubsection{Why Does \gr~Outperform GRPO?}
\label{sec.why_gr3}

We observe a counterintuitive phenomenon: in many settings, \gr~not only shortens responses but also achieves stronger downstream performance than standard GRPO, while maintaining a positive reward gap relative to the GRPO baseline. We attribute this to a difference in how the optimization signal is structured.

Under unconstrained RL such as GRPO, policies often drift toward over-extended reasoning trajectories. Although these trajectories may eventually reach correct answers, they tend to contain many low-contributing tokens. From an optimization perspective, this spreads the learning signal thinly across long responses, reducing the effective impact of reward on the most important reasoning steps. 
\textbf{By discouraging unnecessary verbosity, \gr~compresses reasoning traces while preserving decisive steps.}
This increases the signal density of reward with respect to tokens, allowing optimization to focus more strongly on causally important reasoning patterns rather than distributing gradients over verbose but weakly relevant tokens. Qualitative rollout examples are provided in Appendix \ref{sec.rollout_example}.

\section{Related Work}

Despite remarkable progress, reinforcement learning~\cite{kaelbling1996reinforcement,bai2022training,guo2025deepseek} suffers from high inference costs and growing generation lengths, a bottleneck we term {length inflation}.

One line of work studies efficient reasoning~\cite{feng2025efficient,sui2025stop}, aiming to improve the accuracy–cost trade-off of long chain-of-thought models. Early approaches rely on prompting or supervised fine-tuning to encourage shorter reasoning traces~\cite{ma2025reasoning,xia2025tokenskip,ma2025cot}. More recent methods apply RL to directly optimize efficiency via length-aware objectives~\cite{arora2025training,liu2025learn,liu2025dler,yu2025dapo}. While effective at reducing token usage, such approaches can degrade performance or introduce brittle optimization dynamics due to poorly calibrated penalties or shortcut solutions~\cite{cheng2025optimizinglengthcompressionlarge}.

Another line of work attributes length inflation in RLHF to reward hacking and length bias~\cite{skalse2022defining,gao2023scaling,singhal2023long}. Because reward models may implicitly favor longer responses, verbosity can arise from exploiting reward artifacts rather than true capability gains~\cite{shen2023loose}. Prior efforts mitigate this by improving reward modeling and calibration~\cite{chen2024odin,wang2025beyond} or by applying post-hoc reward correction~\cite{huang2024post}, though many of these solutions are tailored to specific training settings.

Our method is most closely related to RL-based efficient reasoning, and remains effective under reward hacking driven by length bias. 
We propose \gr, a general framework for length regularization that preserves performance while improving the performance–cost Pareto frontier.

\section{Conclusion}

In this work, we identify length inflation as a fundamental inefficiency in RL-trained LLMs, where models tend toward unnecessary verbosity or overthinking. We propose {Group Relative Reward Rescaling (\gr)}, a general framework for lossless length control that regulates reasoning length through a multiplicative, group-relative formulation with advantage-aware calibration. Across both RLVR and RLHF settings, \gr~consistently shifts the performance–cost Pareto frontier outward, reducing token usage while preserving or even improving model capability. These results show that verbosity is not a prerequisite for intelligence, and position \gr~as a practical and general paradigm for training efficient, high-performing LLMs.

\section*{Impact Statement}

This paper presents a method to mitigate length inflation in Large Language Models trained via Reinforcement Learning. The primary positive impact of this work lies in promoting computational efficiency and environmental sustainability (``Green AI"). By significantly reducing token generation, e.g., saving over 40\% of tokens in reasoning tasks without compromising performance, \gr~directly contributes to lowering the financial costs, inference latency, and energy consumption associated with deploying large-scale reasoning models.

Furthermore, this work addresses the alignment challenge of reward hacking, where models exploit verbosity to maximize rewards rather than improving genuine capability. By decoupling performance gains from unnecessary length, we facilitate the development of more concise, interpretable, and user-aligned AI systems. 
Nevertheless, practitioners should monitor for potential over-truncation in safety-critical domains where exhaustive reasoning traces are essential for verification.

\bibliography{custom}
\bibliographystyle{icml2025}

\newpage
\appendix
\onecolumn

\section{Connection to Heuristic Gating Mechanisms}
\label{app:gating_connection}

In Section \ref{sec:multiplicative}, we motivated multiplicative shaping primarily through the lens of removing the compensatory optimization shortcut inherent in additive shaping. In this section, we provide an alternative perspective by analyzing the relationship between our approach and heuristic gating mechanisms~\cite{arora2025training,cheng2025optimizinglengthcompressionlarge}. We demonstrate that multiplicative shaping can be viewed as a principled generalization of heuristic gating: it mathematically reduces to gating in binary reward settings while providing a robust, ``soft'' gating mechanism in continuous reward landscapes where hard indicators fail.

\paragraph{Equivalence in Binary Reward Settings.}

Heuristic gating is a common enhancement to additive shaping in efficient reasoning (RLVR), which prevents models from optimizing length at the expense of accuracy. It typically employs an indicator function $\mathbb{I}(R=1)$ to apply length penalties only when the response is correct.

Let $P$ denote a generic length-based penalty term (e.g., a negative function of length).
Standard \textit{gated additive shaping} modifies the shaping term $S$ in Eq.~\ref{eq:additive_shaping} to be conditional on task success:
\begin{align*}
    \textbf{Gated Additive:}\quad \hat{R}^{(+,g)} &= R + \lambda \cdot S_{\text{gate}}, \notag \\
    \text{where}\quad S_{\text{gate}} &= \mathbb{I}(R=1) \cdot P.
\end{align*}
Here, $\mathbb{I}(R=1)$ acts as a hard gate, $R \in \{0, 1\}$ is the binary task outcome.

Now, consider our \textit{multiplicative shaping} defined in Eq.~\ref{eq:multi_shaping}:
\begin{align*}
    \textbf{Multiplicative:}\quad \hat{R}^{(\times)} &= R \cdot S_{\text{mult}}.
\end{align*}
To facilitate comparison, we can decompose the scaling factor $S_{\text{mult}}$ into a baseline and a deviation term. We rewrite $S_{\text{mult}} = 1 + (S_{\text{mult}} - 1)$, where the deviation corresponds to the implicit penalty applied by the scaling mechanism:
\[
    \lambda P := S_{\text{mult}} - 1 \quad\Longrightarrow\quad S_{\text{mult}} = 1 + \lambda P.
\]
We analyze the behavior in the two binary states:

\begin{itemize}
    \item \textbf{Case $R=0$ (Failure):}
    \begin{align*}
        \hat{R}^{(+,g)} &= 0 + \lambda \cdot (0 \cdot P) = 0 \\
        \hat{R}^{(\times)} &= 0 \cdot (1 + \lambda P) = 0
    \end{align*}
    Both methods deactivate the penalty, preventing premature termination on hard instances.
    
    \item \textbf{Case $R=1$ (Success):}
    \begin{align*}
        \hat{R}^{(+,g)} &= 1 + \lambda \cdot (1 \cdot P) = 1 + \lambda P \\
        \hat{R}^{(\times)} &= 1 \cdot (1 + \lambda P) = 1 + \lambda P
    \end{align*}
    Both methods apply the full penalty to incentivize efficiency among correct solutions.
\end{itemize}

Thus, in the strict binary reward setting typical of RLVR, multiplicative shaping is mathematically equivalent to heuristic gating. It inherits the desirable property of protecting the policy from being penalized on incorrect reasoning paths.

\paragraph{Generalization to Continuous Rewards.}

The limitation of heuristic gating becomes apparent when transitioning to continuous reward settings, such as RLHF (where rewards are typically given by a reward model) or reasoning tasks with partial credit.

In these scenarios, the hard indicator $\mathbb{I}(R=1)$ is ill-defined. Naively replacing it with a threshold $\mathbb{I}(R > \tau)$ introduces hyperparameters and optimization discontinuities. Conversely, removing the gate entirely (reverting to pure additive shaping) reintroduces the trade-off discussed in Proposition \ref{prop:additive-decomp_refined}, where the model can improve $\hat{R}^{(+)}$ by shortening length even if $R$ degrades slightly.

Multiplicative shaping solves this by acting as a {soft gating mechanism}. 
As derived in Proposition \ref{prop:mult-decomp_refined}, the group-normalized advantage under multiplicative shaping contains the following term governing the length signal:
\[
    A(\hat{R}^{(\times)}) \propto R \cdot (S - \mu_S) + \dots
\]
This decomposition demonstrates that the impact of length variation $(S - \mu_S)$ on the advantage is explicitly scaled by the task reward $R$. This creates a dynamic reweighting of the learning signal:

\begin{itemize}
    \item \textbf{Low Quality ($R \approx 0$):} The length signal is suppressed ($R \cdot (S - \mu_S) \approx 0$). The advantage is dominated by the need to improve task correctness, and the policy receives little to no signal regarding length. This mimics an inactive gate, preventing the model from collapsing to short but incorrect responses.
    
    \item \textbf{High Quality ($R \approx 1$):} The length signal is fully active ($R \cdot (S - \mu_S) \approx S - \mu_S$). The advantage significantly favors shorter trajectories within the successful group. This mimics an active gate, effectively prioritizing efficiency once capability is secured.
\end{itemize}

This property effectively interpolates between ``no penalty'' and ``full penalty'' based on the response quality. Consequently, \gr~allows us to apply strong length regularization in RLHF without the risk of the model collapsing to short, low-quality responses, as evidenced by the dynamics in Figure~\ref{fig:reward_length_illustration}.

\paragraph{Empirical Observation.}
We further conduct an analytical experiment, illustrated in Figure \ref{fig:gating_comparison}. Specifically, we convert our multiplicative shaping term into a penalty form used in gated additive shaping by defining $\lambda P := S_{\text{mult}} - 1$. We then introduce different thresholds $\mathbb{I}(R > \tau)$ to extend gated additive shaping to continuous-reward RLHF settings.
We observe that, due to optimization discontinuities, all choices of $\tau$ lead to performance that falls short of standard GRPO. At the same time, generation length is reduced more aggressively, dropping below the typical level of the base policy model.

\begin{figure*}[h]
\begin{center}
\centerline{\includegraphics[width=0.75\linewidth]{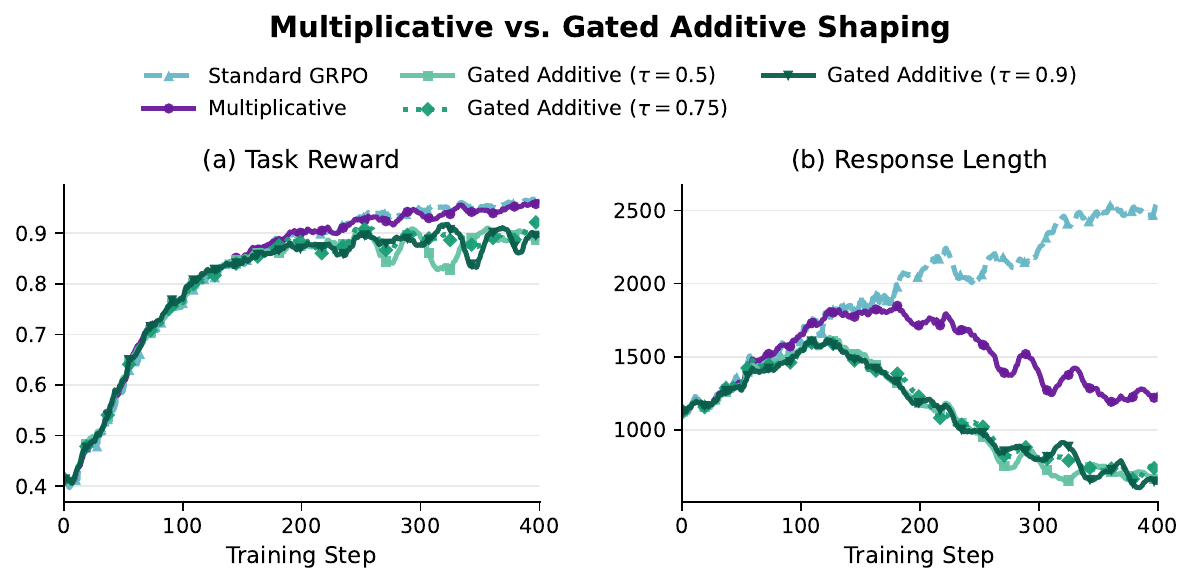}}
\vskip -0.1in
\caption{Comparison of multiplicative and gated additive shaping during RLHF training. Multiplicative shaping matches standard GRPO in task reward while achieving controlled length reduction. In contrast, gated additive variants with different reward thresholds underperform and produce overly short responses, reflecting optimization instability introduced by hard gating.}
\label{fig:gating_comparison}
\vskip -0.1in
\end{center}
\end{figure*}

\section{Analysis of the Difficulty Over-Adaptation Phenomenon}
\label{app:difficulty_overadapt}

In Section~\ref{sec:group_relative}, we discussed how group-relative length regularization adapts the length budget to on-policy statistics. While this removes the rigidity of global thresholds, we observe an unintended side effect in certain shaping strategies (e.g., Efficiently~\cite{arora2025training}): the policy can become \emph{over-adaptive} to perceived task difficulty, a phenomenon we term \emph{difficulty over-adaptation}. Concretely, the model tends to aggressively compress reasoning on easy prompts while failing to effectively restrain excessive length on hard ones, as reflected in Table~\ref{tab:truncation} and illustrated by the examples in Table~\ref{tab:short_cot_examples}. In other words, the regularizer distorts how reasoning effort is allocated across difficulty levels.

To understand the mechanism behind this skew, we analyze the sensitivity of the reward function to changes in length. Consider the Efficiently~\cite{arora2025training} formulation:
\[
\hat{R}(x,y^{(i)}) = R(x,y^{(i)})-\lambda \cdot \mathbb{I}(R(x,y^{(i)})=1)\cdot
s\!\left(\frac{\ell^{(i)}-\bar{\ell}}{\sigma_\ell}\right).
\]
In this formulation, the input to the sigmoid function $s(\cdot)$ is amplified by the factor $1/\sigma_\ell$. This means that the marginal penalty associated with increasing the response length is inversely proportional to the statistical dispersion ($\sigma_\ell$) of the group.

This dependency creates instability across difficulty regimes. On easier prompts, the policy is typically confident and converges to consistent responses, causing the length standard deviation to collapse (i.e., $\sigma_\ell \to 0$). Consequently, the scaling factor $1/\sigma_\ell$ becomes extremely large. In this low-variance regime, even a deviation of a single token is treated as a massive statistical outlier, triggering a severe drop in reward. This hypersensitivity forces the model to over-compress simple responses to avoid harsh penalties. Conversely, on harder prompts, the policy often explores diverse reasoning paths, resulting in a larger $\sigma_\ell$. This attenuates the penalty signal, allowing longer generations to persist with relatively little cost.

In contrast, \gr~normalizes the penalty based on the characteristic scale (mean length $\bar{\ell}$) rather than dispersion:
\[
\hat{R}(x,y^{(i)}) = R(x,y^{(i)}) \cdot \frac{1}{1+\alpha \cdot \frac{\ell^{(i)}}{\bar{\ell}}}.
\]
Here, the sensitivity of the penalty depends on the ratio relative to $\bar{\ell}$. While the mean length $\bar{\ell}$ is naturally smaller for easier tasks (appropriately making the budget tighter), it represents the physical scale of the response and does not collapse to near-zero values as the model becomes confident. By normalizing based on scale rather than variance, \gr~provides a stable regularization signal that remains robust to the model's convergence state, effectively mitigating the imbalance in compression pressure across difficulty levels.

\begin{table}[t]
    \centering \small
    \caption{Example output from the Efficiently~\cite{arora2025training} 1.5B baseline at step 800, showing excessive reasoning truncation and omission of critical intermediate steps, resulting in an incoherent chain of thought.}
    \label{tab:short_cot_examples}
    \vspace{0.08in}
    \begin{tabular} {p{0.95    \textwidth}}
        \toprule
    
    
 
    
    
        \textbf{Prompt} \\
    A $90^{\circ}$ rotation around the origin in the counter-clockwise direction is applied to $7 + 2i.$  What is the resulting complex number? Let's think step by step and output the final answer within \verb|\boxed{}|. \\
    \midrule
    \verb|<think>| \\
    7cos90 -2i sin90=0-2i\\
    \hfill{\footnotesize    \textcolor{gray}{// Extremely short reasoning performed}}\\
    \verb|</think>| \\
    \[ \boxed{-2i} \] \\ \hfill{\footnotesize    \textcolor{gray}{//  Fails to present the correct answer.}}\\
    \bottomrule
    \end{tabular}
\end{table}

\section{The Dilemma of High Reward Density} \label{app:high_reward}

In this section, we analyze a fundamental structural tension that arises when length regularization is combined with group-normalized advantages. We show that under \emph{high reward density}, a seemingly desirable strict requirement—ensuring that all highest-reward trajectories retain positive advantage—is in general mathematically infeasible. 
We then show that even a relaxed average-case criterion can degenerate in the limiting case where all sampled trajectories achieve $R_{\max}$, as a consequence of the convexity of multiplicative length rescaling. 
These observations motivate the relaxed, calibration-based constraint and the online filtering strategy adopted in Section~\ref{sec.calibration}.

\subsection{Impossibility of the Strict Advantage-Preservation Objective} \label{app:density_1}

\paragraph{A Strict but Natural Objective.}

A natural objective for length-aware reinforcement learning is to preserve the optimization signal of the best solutions. Concretely, consider the following strict condition: all trajectories achieving the maximum task reward $R_{\max}$ within a sampled group should receive positive advantages after length regularization. Intuitively, these trajectories represent the highest-quality responses, and assigning them negative advantages risks discouraging correct reasoning behaviors.

Under GRPO-style normalization, the strict objective is therefore equivalent to requiring
\[
\hat{R}(x, y^{(j)}) > \mu_{\hat{R}}, \quad \forall j \in \mathcal{H},
\]
where $\mathcal{H} := \{ j : R(x, y^{(j)}) = R_{\max} \}$ denotes the set of highest-reward trajectories, $\hat{R}(x, y^{(j)})$ is the regularized reward and $\mu_{\hat{R}}$ is the mean regularized reward within the group.

\paragraph{Impracticality Under High Reward Density.}

We now show that under high reward density, the strict objective of preserving positive advantages for \emph{all} highest-reward trajectories is mathematically infeasible, regardless of how small the length penalty coefficient $\alpha$ is.

Under \gr, the regularized reward is
\[
\hat{R}(x, y^{(j)}) = \frac{R(x, y^{(j)})}{1 + \alpha \cdot \frac{\ell^{(j)}}{\bar{\ell}}}.
\]

For all $j \in \mathcal{H}$ we have $R(x, y^{(j)}) = R_{\max}$, so variation in $\hat{R}$ depends only on length. Since the regularization term is monotonically decreasing in $\ell^{(j)}$, the longest trajectory in $\mathcal{H}$,
\[
\ell_{\max} = \max_{j \in \mathcal{H}} \ell^{(j)},
\]
attains the smallest regularized reward
\[
\hat{R}_{\min} = \frac{R_{\max}}{1 + \alpha \cdot \frac{\ell_{\max}}{\bar{\ell}}}.
\]

When reward density is high, the group mean $\mu_{\hat{R}}$ is dominated by trajectories in $\mathcal{H}$ and is therefore approximately their average. Since the minimum of a set cannot exceed its mean, we must have $\hat{R}_{\min} \le \mu_{\hat{R}}$. Hence, at least one highest-reward trajectory receives a non-positive advantage. This shows that, under high reward density, it is impossible for \emph{all} highest-reward trajectories to maintain positive advantages after group normalization; the conflict is structural rather than a consequence of hyperparameter choice.

This impossibility persists even in the limit $\alpha \to 0$. Using a first-order expansion,
\[
\hat{R}(x, y^{(j)}) \approx R_{\max} \cdot \left(1 - \alpha \cdot \frac{\ell^{(j)}}{\bar{\ell}} \right),
\]
and denoting by
\[
\bar{\ell}_{\mathcal{H}} = \frac{1}{k} \sum_{j \in \mathcal{H}} \ell^{(j)},
\]
the mean length among highest-reward trajectories, we obtain
\[
\hat{R}(x, y^{(j)}) - \mu_{\hat{R}}
\approx - R_{\max} \cdot \alpha \cdot \frac{\ell^{(j)} - \bar{\ell}_{\mathcal{H}}}{\bar{\ell}}.
\]
Hence
\[
\hat{A}^{(j)} \propto -\bigl(\ell^{(j)} - \bar{\ell}_{\mathcal{H}}\bigr).
\]
so any highest-reward trajectory longer than the mean length of $\mathcal{H}$ must receive a negative advantage, no matter how small $\alpha$ is. Group normalization enforces a zero-mean constraint that inevitably induces sign flips among equally correct trajectories with different lengths.

\paragraph{Empirical Observation.}

Figure~\ref{fig:density} empirically illustrates this phenomenon. The horizontal axis shows the proportion of trajectories in a group achieving $R_{\max}$ (reward density), and the vertical axis shows the fraction of highest-reward trajectories that satisfy the strict condition $A_i > 0$. Even with a very small penalty strength (Figure~\ref{fig:density_0.05}, $\alpha = 0.05$), the satisfaction rate declines as reward density increases. With a larger $\alpha$ (Figure~\ref{fig:density_5}, $\alpha = 5.0$), the decline becomes much more pronounced. These results confirm that violations of the strict condition arise from an inherent structural conflict rather than poor hyperparameter tuning.

This impossibility result directly motivates the relaxed calibration strategy in Section~\ref{sec.calibration}. Instead of attempting to protect the longest highest-reward trajectory—which is generally infeasible—we adopt an average-case criterion that ensures a typical high-quality trajectory, whose length is near the group mean, remains above the group-average regularized reward. This leads naturally to the practical constraint in Eq.~(\ref{eq:constraint}).

\begin{figure*}[h]
\centering
\subfigure[$\alpha = 0.05$]{
\centering
\includegraphics[height=2.in]{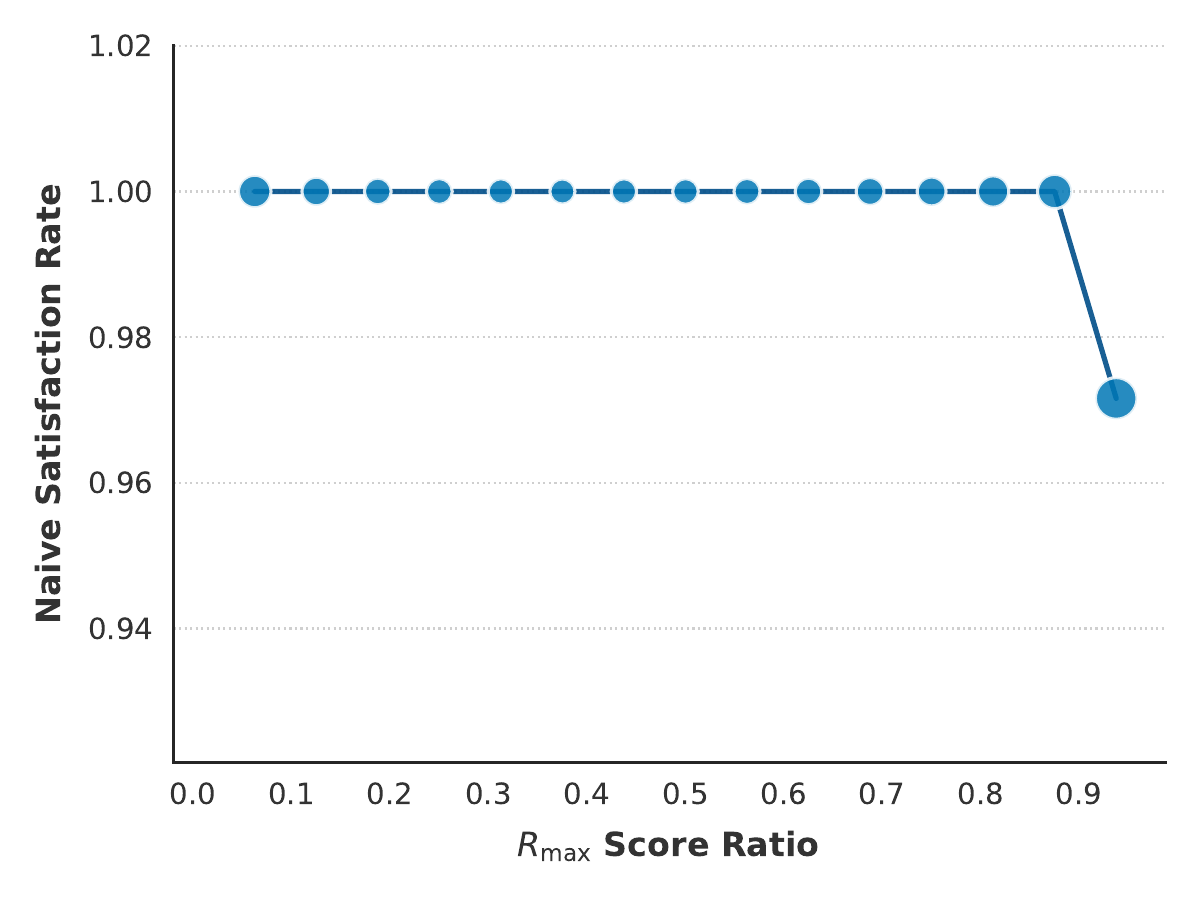} \label{fig:density_0.05}
} 
\subfigure[$\alpha = 5.00$]{
\centering
\includegraphics[height=2.in]{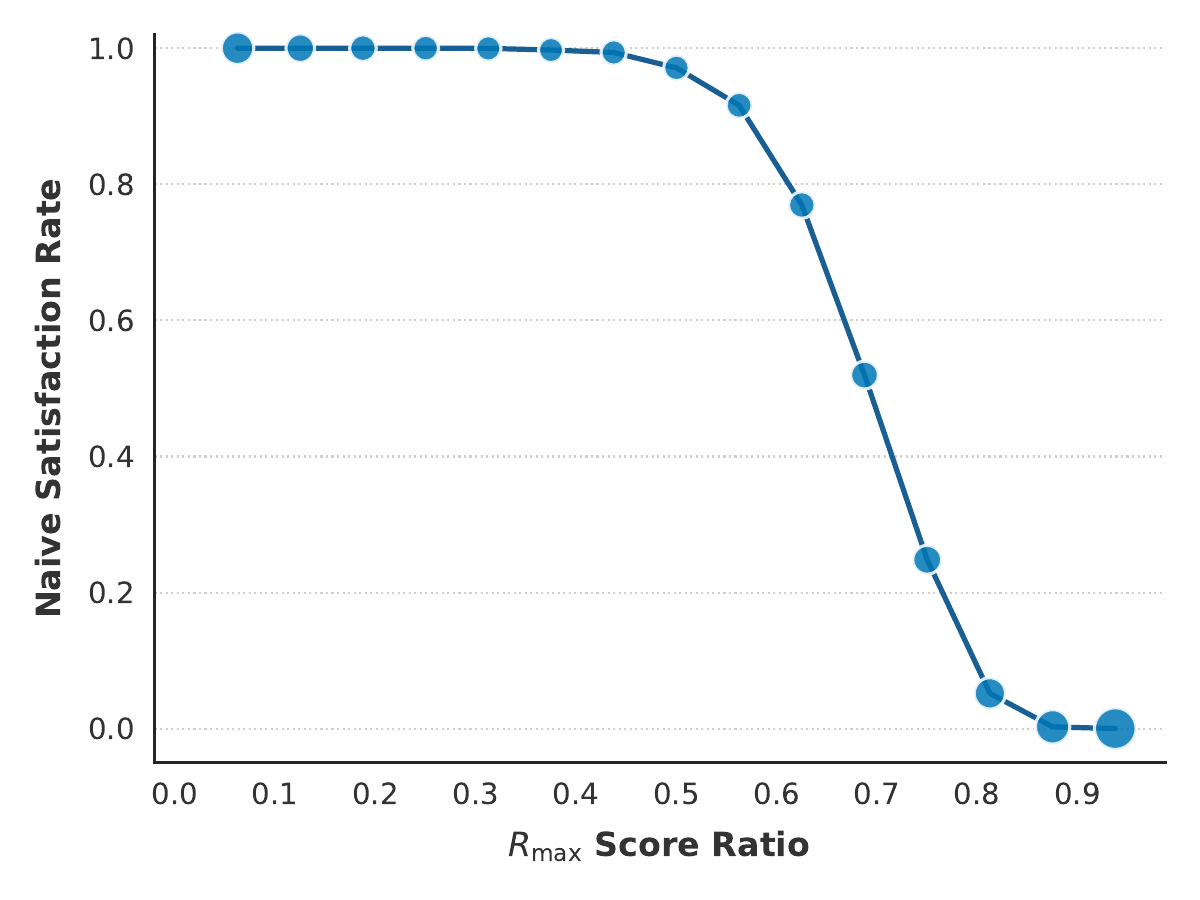} \label{fig:density_5}
} 
\centering
\caption{
Effect of reward density on the strict advantage-preservation objective. As reward density increases, the strict constraint satisfaction rate decreases, even for a small penalty coefficient $\alpha = 0.05$, and more noticeably for a larger penalty $\alpha = 5.0$.
} \label{fig:density}
\vskip -0.1in
\end{figure*}

\subsection{Degeneracy of the Average-Case Criterion in the All-$R_{\max}$ Limit}
\label{app:density_2}

Recall the average-case calibration criterion (Eq.~(\ref{eq:constraint})), which requires that a representative
high-quality trajectory (reward $R_{\max}$ and average length $\bar{\ell}$) retains non-negative advantage:
\[
\frac{R_{\max}}{1+\alpha} \;\ge\; \mu_{\hat R},
\]
where $\mu_{\hat R}$ denotes the within-group mean of the regularized rewards. As noted in
Section~\ref{sec.calibration}, this constraint can fail in the limiting case where all trajectories in the group achieve
$R_{\max}$, motivating online filtering.

\paragraph{All-$R_{\max}$ case reduces the condition to a Jensen inequality.}
Assume a sampled group satisfies $R(x,y^{(i)}) = R_{\max}$ for all $i\in\{1,\dots,G\}$.
Under \gr, the regularized reward is
\[
\hat R^{(i)} \;=\; R_{\max}\cdot \frac{1}{1+\alpha\cdot \frac{\ell^{(i)}}{\bar{\ell}}}.
\]
Define the normalized length ratio $z_i := \ell^{(i)}/\bar{\ell}$, which obeys $\frac1G\sum_{i=1}^G z_i = 1$.
Let
\[
f(z) := \frac{1}{1+\alpha z},\qquad z\ge 0.
\]
Then $\mu_{\hat R} = R_{\max}\cdot \frac1G\sum_{i=1}^G f(z_i)$, and Eq.~(\ref{eq:constraint}) becomes
\[
f(1) \;\ge\; \frac1G\sum_{i=1}^G f(z_i).
\label{eq:jensen_target}
\]

\paragraph{Convexity flips the inequality.}
A direct calculation gives
\[
f''(z)=\frac{2\alpha^2}{(1+\alpha z)^3} > 0,
\]
so $f$ is \emph{convex} on $[0,\infty)$. By Jensen's inequality,
\[
f\!\left(\frac1G\sum_{i=1}^G z_i\right) \;\le\; \frac1G\sum_{i=1}^G f(z_i).
\]
Using $\frac1G\sum_i z_i = 1$, we obtain
\[
f(1) \;\le\; \frac1G\sum_{i=1}^G f(z_i),
\label{eq:jensen_reverse}
\]
which is the \emph{reverse} of the desired condition.
Moreover, the inequality is strict whenever the lengths are not all identical (i.e., when $z_i$ is not constant).
Therefore, in an all-$R_{\max}$ group, the average-case constraint in Eq.~(\ref{eq:constraint}) can only hold in the
degenerate case $\ell^{(i)}=\cdots=\ell^{(G)}$; otherwise it must fail. This explains why we exclude such
groups via online filtering in practice.

\section{Detailed Experimental Settings} \label{app.exp_settings}

We evaluate \gr~in representative post-training regimes spanning
RLVR and RLHF.
In the RLVR setting, we study whether the model can reduce unnecessary
{long CoT} reasoning while maintaining task
performance improvement.
As for the RLHF setting, we examine whether our method mitigates {reward
hacking} \cite{amodei2016concreteproblemsaisafety} and produces well-aligned responses with natural response lengths.

In this section, we provide the detailed hyperparameters and configurations used in our experiments. All experiments are conducted using veRL~\cite{sheng2024hybridflow} as the training framework, based on a standard GRPO advantage estimator. The detailed implementation settings for the experiments are shown in Table~\ref{tab:hyperparams}.

Moreover, we evaluate mathematical reasoning on AIME-24 \cite{aime24}, AIME-25 \cite{aime25}, AMC-23 \cite{amc}, and MATH500 \cite{lightman2023letsverifystepstep}; code generation on LiveCodeBench v6 \cite{jain2024livecodebench} and MultiPL-E \cite{cassano2022multipl}; and conversational ability on Arena-Hard-Auto \cite{li2024crowdsourced} and AlpacaEval \cite{dubois2024length}.
Following standard practice, we report the length-controlled (LC) win rate for AlpacaEval. For Arena-Hard-Auto, we do not apply length control and instead use scores derived directly from the original pairwise judge.

\begin{table}[h]
\centering
\caption{Hyper-parameters used in our experiments.}
\label{tab:hyperparams}
\vspace{0.25em}
\begin{tabular}{c@{\hspace{22pt}}c@{\hspace{22pt}}c@{\hspace{22pt}}c}
\toprule
\textbf{Hyperparameter} & \textbf{Math} & \textbf{Code} & \textbf{Chat} \\
\midrule
\gr~Coefficient $\alpha$ & 0.33 & 0.33 & 0.00133 \\
KL Coefficient $\beta$ & 0.0 & 0.0 & 0.001 \\
Batch Size          & 128   & 128   & 128   \\
Mini Batch Size     & 128   & 128   & 128   \\
Prompt Length       & 1024  & 1024  & 1024  \\
Rollout Length     & 16384 & 16384 & 4096 \\
Rollout Temperature & 1.0   & 1.0    & 1.0    \\
Rollout Group Size  & 16    & 8     & 8     \\
Optimizer           & AdamW & AdamW    & AdamW    \\
Weight Decay        & 0.01  & 0.01    & 0.01    \\
Learning Rate       & 2e-6  & 2e-6  & 1e-6  \\
Scheduler Type      & Constant & Constant  & Constant    \\
Training Step                & 1000  & 800   & 400   \\
Evaluation Length & 32768 & 32768 & 8192 \\
Evaluation Temperature & 0.6 & 0.6 & 0.0 \\
\bottomrule
\end{tabular}
\end{table}

\section{Additional Experimental Results} \label{sec.additional_results}

\subsection{Mathematical Reasoning Results on 1.5B Models} \label{sec.additional_1.5b}

This section presents experimental results on mathematical reasoning tasks, using DeepSeek–R1–Distill–1.5B as the base model. As shown in Table~\ref{tab:len_perf_1p5b}, the findings are consistent with the experiments of DeepSeek–R1–Distill–7B: \gr~ not only significantly enhances the model’s reasoning capabilities but also substantially reduces the length of the generated output in terms of tokens, demonstrating good performance across different scales.

\begin{table*}[h]
\centering
\small
\caption{Mathematical reasoning performance for 1.5B models.}
\label{tab:len_perf_1p5b}
\vspace{0.1in}
\begin{tabular}{lcccccccc}
\toprule
& \multicolumn{2}{c}{AIME24} & \multicolumn{2}{c}{AIME25} & \multicolumn{2}{c}{AMC23} & \multicolumn{2}{c}{MATH500} \\
\cmidrule(lr){2-3}\cmidrule(lr){4-5}\cmidrule(lr){6-7}\cmidrule(lr){8-9}
Model
& Avg@32$\uparrow$ & \#Token$\downarrow$
& Avg@32$\uparrow$ & \#Token$\downarrow$
& Avg@16$\uparrow$ & \#Token$\downarrow$
& Avg@4$\uparrow$  & \#Token$\downarrow$ \\
\midrule
\multicolumn{9}{l}{\textit{Initial model}} \\
DeepSeek--R1--Distill--1.5B
& 30.0 & 16531 & 23.6 & 15799 & 70.8 & 9351 & 83.9 & 5399 \\
\midrule
\multicolumn{9}{l}{\textit{Length-oriented RL}} \\
LCR1--1.5B
& 23.5 & 9071 & 20.6 & 8275 & 67.8 & 4170 & 81.9 & 2520 \\
Laser--DE--L4096--1.5B
& 30.1 & 5770 & 24.1 & 5008 & 73.4 & 3110 & 84.6 & 1931 \\
AdaptThink--1.5B
& 34.2 & 9204 & 24.7 & 9234 & 63.3 & 2859 & 80.9 & 1649 \\
DLER--R1--1.5B
& 34.3 & 3839 & 27.8 & 3153 & 82.1 & 2419 & 87.2 & 1783 \\
\midrule
\multicolumn{9}{l}{\textit{Performance-oriented RL}} \\
GRPO
& 39.6 & 13054 & 31.4 & 12985 & 81.9 & 9917 & 85.6 & 7138 \\
\gr~\textit{(ours)}
& 45.2 & 8381 & 32.8 & 8137 & 81.6 & 4153 & 89.3 & 2214 \\
\bottomrule
\end{tabular}
\end{table*}

\subsection{Ablation Results on Penalty Strength $\alpha$} \label{sec.additional_ablation}

We analyze the sensitivity of \gr~ to the length penalty coefficient $\alpha$ by sweeping its value under identical training settings. Detailed results on 1.5B models are shown in Table \ref{tab:alpha_sweep_1p5b}. 

When $\alpha$ is large (e.g., $\alpha = 1.0$), the multiplicative rescaling term heavily penalizes long trajectories, largely independent of their reward level. In this regime, \gr~ behaves similarly to conventional length-penalized RL, where optimization is driven primarily by shortening responses rather than improving solution quality. Although token usage is significantly reduced, the performance gains over the initial model become noticeably smaller, indicating that overly aggressive regularization suppresses useful long-form reasoning.

As $\alpha$ decreases, response length increases in a gradual and well-behaved manner, while task performance first improves and then saturates. In particular, moving from $\alpha = 0.33$ to smaller values (e.g., $0.2$ and $0.1$) yields longer generations but only marginal or inconsistent accuracy improvements. This empirical trend aligns closely with the analysis in Section \ref{sec.calibration}. Once the penalty is weak enough that the advantage of representative high-quality trajectories is preserved, further reducing $\alpha$ mainly relaxes the length constraint without introducing stronger optimization signals. In other words, the training has already crossed the advantage-preservation boundary, after which additional reasoning length no longer translates into meaningful performance gains.

Overall, $\alpha$ controls distinct behavioral regimes, and \gr~ provides a principled way to select it near the advantage-preserving transition between length control and capability gains.

\begin{table*}[t]
\centering
\small
\caption{Hyperparameter sweep of the length penalty coefficient $\alpha$ in \gr~on 1.5B models.}
\label{tab:alpha_sweep_1p5b}
\vspace{0.08in}
\begin{tabular}{lcccccccc}
\toprule
& \multicolumn{2}{c}{AIME24} & \multicolumn{2}{c}{AIME25} & \multicolumn{2}{c}{AMC23} & \multicolumn{2}{c}{MATH500} \\
\cmidrule(lr){2-3}\cmidrule(lr){4-5}\cmidrule(lr){6-7}\cmidrule(lr){8-9}
Model
& Avg@32$\uparrow$ & \#Token$\downarrow$
& Avg@32$\uparrow$ & \#Token$\downarrow$
& Avg@16$\uparrow$ & \#Token$\downarrow$
& Avg@4$\uparrow$  & \#Token$\downarrow$ \\
\midrule
\multicolumn{9}{l}{\textit{Initial model}} \\
DeepSeek--R1--Distill--1.5B
& 30.0 & 16531 & 23.6 & 15799 & 70.8 & 9351 & 83.9 & 5399 \\
\midrule
\multicolumn{9}{l}{\textit{Our method: varying $\alpha$}} \\
\gr~($\alpha=1.00$)
& 34.9 & 6316 & 27.4 & 5927 & 75.2 & 1754 & 83.0 & 848 \\
\gr~($\alpha=0.50$)
& 40.5 & 7923 & 29.6 & 7399 & 82.3 & 3245 & 87.1 & 1705 \\
\gr~($\alpha=0.33$)
& \textbf{45.2} & \textbf{8381} & \textbf{32.8} & \textbf{8137} & \textbf{81.6} & \textbf{4153} & \textbf{89.3} & \textbf{2214} \\
\gr~($\alpha=0.20$)
& 45.1 & 10174 & 32.4 & 10250 & 83.0 & 4887 & 88.9 & 2838 \\
\gr~($\alpha=0.10$)
& 43.2 & 10759 & 31.7 & 10701 & 83.0 & 6235 & 88.2 & 3419 \\
\bottomrule
\end{tabular}
\end{table*}

\section{Qualitative Analysis of Rollout Trajectories} \label{sec.rollout_example}

To better understand the behavioral differences induced by the two training objectives, we present representative rollout examples from a \gr-trained model and a GRPO-trained baseline on the same reasoning prompt. Tables~\ref{tab:rollout_ours} and~\ref{tab:rollout_grpo} show the generation traces.

\paragraph{\gr: concise reasoning with preserved structure.}

As illustrated in Table~\ref{tab:rollout_ours}, the \gr-trained model produces a reasoning trajectory that is both structured and economical. The solution follows a clear progression: (i) restating the task, (ii) identifying the periodicity of directions under full rotations, (iii) reducing the angle to a remainder modulo $360^\circ$, and (iv) mapping the residual rotation to a compass direction. Each step contributes directly to advancing the solution, and intermediate checks serve to confirm rather than re-derive earlier results.

Importantly, the trajectory terminates decisively with a correctly formatted boxed answer. The reasoning chain is neither artificially shortened nor overly verbose; instead, redundant re-computation and self-doubt loops are largely absent. This reflects a policy that has learned to allocate tokens primarily to causally relevant steps.

\paragraph{GRPO: verbose loops and diluted signal.}

In contrast, the GRPO-trained baseline (Table~\ref{tab:rollout_grpo}) exhibits substantially different behavior. Although it repeatedly identifies correct intermediate facts—such as the $360^\circ$ periodicity and the $2250 \bmod 360 = 90^\circ$ reduction—it frequently re-derives them, questions previously established conclusions, and oscillates between equivalent formulations (e.g., full rotations vs.\ fractional rotations). The trajectory contains multiple self-corrections that do not introduce new information.

This pattern results in long reasoning traces where many tokens are only weakly related to forward progress. From an optimization perspective, such trajectories distribute the reward signal across a large number of low-impact tokens, reducing the effective learning pressure on decisive reasoning steps. Moreover, despite eventually circling back to the correct direction, the model fails to present a clean final answer in the required boxed format, leaving the solution inconclusive.

\paragraph{Implications for optimization dynamics.}

These qualitative differences align with the mechanism discussed in Section~\ref{sec.why_gr3}. 
By down-weighting the advantages of unnecessarily long trajectories, \gr~implicitly favors reasoning paths with higher informational density per token. This encourages the policy to preserve essential logical structure while avoiding non-robust reasoning patterns, such as repetitive or self-doubting loops. As a result, the reward signal becomes more concentrated on tokens that correspond to meaningful reasoning transitions, rather than being diluted across verbose yet low-contributing segments.


\begin{table}[h]
    \centering \small
        \caption{Example output from our \gr-trained 1.5B model, demonstrating robust, clearly structured step-by-step reasoning and a correct final answer in boxed format.}
\label{tab:rollout_ours}
\vspace{0.08in}
    \begin{tabular} {p{0.95\textwidth}}
\toprule
    \textbf{Prompt}\\
    A figure skater is facing north when she begins to spin to her right. She spins 2250 degrees. Which direction (north, south, east or west) is she facing when she finishes her spin? Let's think step by step and output the final answer within \verb|\boxed{}|. \\ 
    \midrule
      \textbf{Example Output from our \gr-trained 1.5B Model}\\
\verb|<think>| \\
Okay, so I need to figure out which direction a figure skater is facing after spinning 2250 degrees to her right. She starts facing north. Hmm, let's break this down step by step. \\ \hfill{\footnotesize\textcolor{gray}{// Restates the problem and sets up a step-by-step plan.}}

First, I know that a full rotation is 360 degrees, so every time she spins 360 degrees, she'll end up facing the same direction she started. So, to find out where she ends up after 2250 degrees, I should probably find the equivalent angle within a single rotation. \\ \hfill{\footnotesize\textcolor{gray}{// Recalls the key concept: directions repeat every 360°.}}

Let me calculate how many full rotations 2250 degrees is. I'll divide 2250 by 360. Let's see, 2250 ÷ 360. Let me compute that: 360 × 6 is 2160, and 2250 - 2160 is 90. So, 2250 degrees is equivalent to 6 full rotations plus an additional 90 degrees. \\ \hfill{\footnotesize\textcolor{gray}{// Performs division to reduce the rotation to full turns plus a remainder.}}

So, she spins 6 full circles, which brings her back to the starting direction, facing north. Then she spins an additional 90 degrees to her right. Now, I need to figure out where she ends up after spinning 90 degrees to the right from north. \\ \hfill{\footnotesize\textcolor{gray}{// Uses the result: full turns cancel out, leaving only the 90° remainder.}}

Let's recall the directions: when facing north, a right turn would be towards east. So, if she spins 90 degrees to the right, starting from north, she would face east. \\ \hfill{\footnotesize\textcolor{gray}{// Maps a 90° right turn from north to the compass direction east.}}

Wait, let me double-check that. If she's facing north and spins 90 degrees to the right, her new direction is east. Yeah, that makes sense. So after 2250 degrees, she ends up facing east. \\ \hfill{\footnotesize\textcolor{gray}{// Verifies the directional reasoning and confirms the conclusion.}}

So the answer should be east. \\

**Final Answer**
\[
\boxed{east}
\]
\verb|</think>| \\

The figure skater starts facing north and spins 2250 degrees to her right. 

First, we determine the equivalent angle within a single rotation by dividing 2250 by 360. This gives us 6 full rotations plus an additional 90 degrees (since 2250 - 6*360 = 90).

After completing 6 full rotations, the skater is back facing north. She then spins an additional 90 degrees to her right from north. 

When facing north and spinning 90 degrees to the right, the skater will face east.
\[
\boxed{east}
\] \hfill{\footnotesize\textcolor{gray}{// Formats the final answer as required.}}
\\
\bottomrule
    \end{tabular}
\end{table}

\begin{table}[h]
    \centering \small
\caption{Example output from the GRPO-trained 1.5B baseline, exhibiting redundant reasoning loops, repeated self-doubt, and failure to produce a conclusive final answer.}
\label{tab:rollout_grpo}
\vspace{0.08in}
    \begin{tabular} {p{0.95\textwidth}}
\toprule
    \textbf{Prompt}\\
    A figure skater is facing north when she begins to spin to her right. She spins 2250 degrees. Which direction (north, south, east or west) is she facing when she finishes her spin? Let's think step by step and output the final answer within \verb|\boxed{}|. \\ 
    \midrule
      \textbf{Example Output from the GRPO-trained 1.5B Model}\\
\verb|<think>| \\
Okay, so I have this problem here about a figure skater spinning. Let me try to figure it out step by step. Hmm, she's facing north and starts spinning to her right. She spins 2250 degrees, and we need to find out which direction she's facing after that. The options are north, south, east, or west. \\

Alright, first, I know that when you spin, your direction changes based on how many degrees you turn. But since she's starting facing north and spinning to her right, I need to figure out how that affects her direction after a certain number of degrees. \\

Wait, hold on. When you spin to your right, does that mean you're turning clockwise or counterclockwise? I think in the context of facing north, spinning to your right would be clockwise because if you're facing north and turn right, you're going towards east, then south, then west, and back to north. So, right is clockwise. \\

But maybe I should clarify. If you're facing north and you spin to your right, you're turning clockwise. So, each 90-degree turn would change your direction. So, 90 degrees to the right from north would be east, another 90 degrees would be south, another 90 degrees would be west, and another 90 degrees back to north. That makes sense. \\
\hfill{\footnotesize\textcolor{gray}{// Repeatedly re-derives the meaning of “spin to her right”.}}

Now, she spins 2250 degrees. Hmm, that's a lot. Let me think about how to break this down. Since a full circle is 360 degrees, maybe I can find out how many full rotations she makes and then the remaining degrees. \\

So, if I divide 2250 by 360, that will tell me how many full spins she does and what the remainder is. Let me do that: 2250 ÷ 360. Let me compute that. \\

First, 360 times 6 is 2160, right? Because 360*6 = 2160. Then, 2250 - 2160 is 90. So, 2250 degrees is equal to 6 full rotations (which bring her back to the starting point) plus an additional 90 degrees. So, effectively, she only needs to consider the 90-degree turn after the full rotations. \\

But wait, hold on. Wait, is it 6 full rotations or 6.25? Wait, no, 2250 divided by 360 is 6.25, because 360*6=2160, and 2250-2160=90, so it's 6 and 90/360, which is 6.25 full rotations. So, 6.25 rotations is the same as 6 full rotations plus 0.25 of a rotation. \\

But since a full rotation brings her back to the starting direction, the 6 full rotations don't change her facing. So, the key is the 0.25 of a rotation, which is 90 degrees. \\

But wait, hold on. She starts spinning to her right, which is clockwise. So, spinning 90 degrees clockwise from north would make her face east. So, after 2250 degrees, which is equivalent to 6 full rotations plus 90 degrees, she would have effectively turned 90 degrees clockwise from her starting direction. \\

But wait, hold on. Wait, the problem says she starts spinning to her right. So, does that mean she is spinning clockwise? Because if she's facing north and spins to her right, that's a clockwise spin. So, each 90-degree turn clockwise would change her direction. \\

So, if she spins 90 degrees clockwise, she would face east. 180 degrees clockwise would face her west, and 270 degrees clockwise would face her south. But in this case, she only spins 90 degrees, which is 2250 degrees, which is 6 full rotations plus 90 degrees. \\

Wait, but 2250 degrees is a lot. Let me make sure I'm calculating this correctly. So, 2250 divided by 360 is equal to... let's see, 360 times 6 is 2160, as I had before. 2250 minus 2160 is 90. So, 2250 degrees is 6 full circles plus 90 degrees. So, she's effectively turned 90 degrees. \\
\hfill{\footnotesize\textcolor{gray}{// Unnecessarily re-checks the same arithmetic multiple times.}}

But since she's spinning to her right, which is clockwise, a 90-degree turn would make her face east. So, she starts facing north, spins 90 degrees clockwise, so she faces east. Is that right? \\

Wait, no, hold on. Wait, maybe I'm mixing up the direction of rotation. If she's spinning to her right, that's a clockwise rotation, but in terms of her facing direction, starting north, spinning 90 degrees clockwise would make her face east. \\

Wait, but let me think again. When you spin to your right, you're turning clockwise. So, if you're facing north, turning right (clockwise) 90 degrees would make you face east. So, 90 degrees clockwise from north is east. So, 2250 degrees is 6 full rotations plus 90 degrees. So, after the full rotations, she's back to north, and then she turns 90 degrees clockwise, so she's facing east. \\
\hfill{\footnotesize\textcolor{gray}{// Enters a self-doubt loop, questioning previously correct directional reasoning.}}

\ldots \\

Wait, no, wait. If you add 2250 degrees, but since 360 is a full circle, 2250 degrees is equivalent to 90 degrees. So, starting from north (0 degrees), adding 90 degrees clockwise is east. So, yes, she's facing east. \\

But wait, hold on, is it 90 degrees or 270 degrees? Wait, no, because she's spinning to her right, which is clockwise, so 90 degrees clockwise from north is east. \\

Wait, maybe I should draw a diagram in my mind. \\
\ldots \\
\hfill{\footnotesize\textcolor{gray}{// Fails to present the final answer, leaving the solution inconclusive.}} \\
\bottomrule
    \end{tabular}
\end{table}

\section{Author Contributions}

Zichao Li led the project.

Jie Lou and Yaojie Lu supervised the project.

Fangchen Dong contributed to the experimental implementation.

Zhiyuan Fan contributed by first sketching the prototype of the method during early discussions.

Mengjie Ren contributed to the experiments on code generation tasks.

Hongyu Lin, Xianpei Han, Debing Zhang, Le Sun, and Xing Yu provided supervision and contributed to technical discussions throughout the project.


\end{document}